\documentclass[english,10pt,journal]{IEEEtran}
\usepackage{color,graphicx,amsmath,amssymb,amsthm,epsfig,mathrsfs,cite,bm,enumerate}
\usepackage{array}
\usepackage{verbatim}
\usepackage{mathtools}

\DeclarePairedDelimiter\floor{\lfloor}{\rfloor}
\usepackage{color,soul}
\usepackage{multirow}
\makeatletter
\newcommand*{\rom}[1]{\expandafter\@slowromancap\romannumeral #1@}
\makeatother

\setlength{\extrarowheight}{10pt}
% Radha Krishna Ganti

%

% Colors
%\definecolor{shadecolor}{rgb}{1,0.8,0.3}

% Spaces

% Operators

% Variables

\newcommand{\upperRomannumeral}[1]{\uppercase\expandafter{\romannumeral#1}}

% Boolean operators
%  \providebool{Advanced}
%  \providebool{Basic}
%  \providebool{Homework}

%% Definitions
\newtheorem{lemma}{Lemma}{}
  \newtheorem{thm}{Theorem}

\theoremstyle{remark} \newtheorem{remark}{Remark}
\usepackage{amssymb}
\usepackage{float}
\usepackage{tikz}
\usetikzlibrary{trees}
\usepackage{algorithm}% http://ctan.org/pkg/algorithms
\usepackage{algpseudocode}% http://ctan.org/pkg/algorithmicx
\usepackage{lipsum}% http://ctan.org/pkg/lipsum
\usepackage[lofdepth,lotdepth]{subfig}
\usepackage{enumerate}

\title{Tuning Free Orthogonal Matching Pursuit} 

\author{Sreejith Kallummil,  \hspace{0cm} Sheetal Kalyani  \\
 Department of Electrical Engineering \\  Indian Institute of Technology Madras\\
  Chennai, India 600036 \\
  \{ee12d032,skalyani\}@ee.iitm.ac.in
  }

\begin{document}
\maketitle
\begin{abstract}
Orthogonal matching pursuit (OMP) is a widely used compressive sensing (CS) algorithm for recovering sparse signals in noisy linear regression models.  The performance of OMP depends on its stopping criteria (SC). SC for OMP discussed in literature  typically assumes  knowledge of either the sparsity of the  signal  to be estimated $k_0$ or noise variance $\sigma^2$, both of which are unavailable in many practical applications. In this article we develop a modified version of OMP called tuning free OMP or TF-OMP which does not require a SC. TF-OMP is proved to accomplish successful sparse recovery  under the usual assumptions on restricted isometry constants (RIC) and mutual coherence of design matrix.  TF-OMP is numerically shown to  deliver a  highly competitive performance in comparison with OMP having \textit{a priori} knowledge of $k_0$ or $\sigma^2$. Greedy algorithm for robust de-noising (GARD) is an OMP like algorithm proposed for efficient estimation in
classical overdetermined linear regression models corrupted by sparse outliers. However, GARD requires the knowledge of inlier noise variance which is difficult to estimate.
We also produce a tuning free algorithm (TF-GARD) for efficient estimation  in the presence of sparse outliers by extending the operating principle of TF-OMP to GARD.   TF-GARD is numerically shown to achieve a performance comparable to that of the existing implementation of GARD.    
\end{abstract}

\section{Introduction}

Consider the linear regression model ${\bf y}={\bf X}\boldsymbol{\beta}+{\bf w}$, where ${\bf X} \in \mathbb{R}^{n \times p}$ is a known design matrix, ${\bf w}$ is the noise vector and ${\bf y}$ is the observation vector. The design matrix is  rank deficient in the sense that $rank({\bf X})< p$.   Further, the columns of ${\bf X}$ are normalised to have unit Euclidean $(l_2)$ norm. The vector $\boldsymbol{\beta}\in \mathbb{R}^{n \times p}$ is sparse, i.e., the support of $\boldsymbol{\beta}$ given by $\mathcal{I}=supp(\boldsymbol{\beta})=\{k:\boldsymbol{\beta}_k\neq 0\}$ has cardinality $k_0=|\mathcal{I}|\ll p$. The noise vector ${\bf w}$ is assumed to have Gaussian distribution with mean ${\bf 0}_n$ and covariance $\sigma^2{\bf I}_n$, i.e., ${\bf w} \sim \mathcal{N}({\bf 0}_n,\sigma^2{\bf I}_n)$.   The signal to noise ratio in this regression model is defined  as 
\begin{equation}
SNR=\dfrac{\|{\bf X\boldsymbol{\beta}}\|_2^2}{\mathbb{E}(\|{\bf w}\|_2^2)}=\dfrac{\|{\bf X\boldsymbol{\beta}}\|_2^2}{n\sigma^2}.
\end{equation}
 Throughout this paper, $\mathbb{E}()$ represents the expectation operator and $\|{\bf x}\|_q=\left( \sum\limits_{k=1}^p|\boldsymbol{\beta}_k|^q\right)^{\frac{1}{q}} $ represents the $l_q$ norm of ${\bf x}$.  In this article we consider the following  two  problems  in the context of recovering sparse vectors in underdetermined linear regression models which are of larger interest. 

P1). Estimate ${\boldsymbol{\beta}}$ with the objective of minimizing the mean squared error (MSE) $MSE(\hat{{\boldsymbol{\beta}}})=\mathbb{E}(\|{\boldsymbol{\beta}}-\hat{{\boldsymbol{\beta}}}\|_2^2)$.

P2). Estimate the support of ${\boldsymbol{\beta}}$ with the objective of minimizing the probability of support recovery error $PE(\hat{\boldsymbol{\beta}})=\mathbb{P}(\hat{\mathcal{I}}\neq \mathcal{I})$, where $\hat{\mathcal{I}}=supp(\hat{\boldsymbol{\beta}})$. \\

These problems  are common in  signal processing applications like sparse channel estimation\cite{channel}, direction of arrival estimation\cite{single_snap}, multi user detection\cite{multiuserCS} etc. Typical machine learning applications include sparse subspace clustering\cite{subspace}, sparse representation classification\cite{facerecognition} etc. In signal processing community these problems are  discussed under the  compressive sensing (CS) paradigm\cite{eldar2012compressed}.  A number of algorithms like least absolute shrinkage and selection operator (LASSO)\cite{tropp2006just,candes2009near}, Dantzig selector (DS)\cite{candes2007dantzig}, subspace pursuit (SP)\cite{subspacepursuit}, compressive sampling matching pursuit (CoSaMP)\cite{cosamp}, sparse Bayesian learning (SBL)\cite{wipf2004sparse}, orthogonal matching pursuit (OMP)\cite{tropp2004greed,cai2011orthogonal,OMP_wang,tropp2007signal,omp_decay,extra,brownian} etc. are proposed to solve the above mentioned problems. However, for optimal performance of these algorithms, a number of tuning parameters (also called hyper parameters) need to be fixed. For example, the value of $\lambda$ in LASSO estimate 
\begin{equation}\label{lasso}
\hat{\boldsymbol{\beta}}=\underset{{\bf b}}{\arg\min}\dfrac{1}{2}\|{\bf y}-{\bf Xb}\|_2^2+\lambda\|{\bf b}\|_1
\end{equation}   
has to tuned appropriately. Indeed, when the noise is Gaussian a value $\lambda=O(\sigma\sqrt{2\log(p)})$ is known to be optimal in terms of MSE performance\cite{candes2009near}. Likewise, a value of $\lambda\propto \sigma^{1-\alpha}$ with $0<\alpha<1$  is known to deliver $PE \rightarrow 0$ as $\sigma^2 \rightarrow 0$ under some regularity conditions\cite{cs_tsp}. Likewise, for the optimal performance of DS, one need to have knowledge of $\sigma^2$\cite{candes2007dantzig}.  However, unlike the case of overdetermined linear regression models where one can readily estimate $\sigma^2$ using the maximum likelihood (ML) estimator, estimating $\sigma^2$ in underdetermined linear regression models is extremely difficult\cite{giraud2012}. This means that the optimal performance using LASSO and DS in many practical applications involving Gaussian noise\footnote{Apart from the Gaussian noise model considered in this paper, two other noise models popular in literature are $l_2$ bounded noise $(\|{\bf w}\|_2<\epsilon_2)$ and $l_{\infty}$ bounded noise $(\|{\bf w}\|_{\infty}<\epsilon_{\infty})$. The optimal performance of aforementioned algorithms in these models requires the knowledge of parameters $\epsilon_2$ and $\epsilon_{\infty}$ which are also difficult to estimate.} is not possible. Even if the noise variance is known, an amount of subjectivity is involved in fixing the tuning parameters. SBL on the other hand involves a non convex optimization problem  which is solved using the expectation maximization (EM) algorithm and hence the solution depend on the initialization values of EM algorithm.   Likewise, algorithms like CoSaMP, SP etc. requires \textit{ a priori} knowledge of sparsity level $k_0$ which is rarely available. OMP, which is the focus of this article, requires either the knowledge of $k_0$ or the knowledge of $\sigma^2$ for optimal performance.  Hence, in many practical applications, the statistician is forced to choose ad hoc  values of tuning parameters for which no performance guarantees are available. A popular alternative is  based on techniques like cross validation which can deliver reasonably good performance at the expense of significantly high computational complexity\cite{crossvaldation,cross_val,giraud2012}. Further, cross validation is also known to be ineffective for  support recovery problems\cite{cross_val}. 
\subsection{Tuning parameter free sparse recovery.}
The literature on tuning parameter free sparse recovery procedure is new in comparison with the literature on sparse recovery algorithms like OMP, LASSO, DS etc. A seminal contribution in this field is the square root LASSO\cite{sqlasso} algorithm which estimate $\boldsymbol{\beta}$ by
\begin{equation}
\hat{\boldsymbol{\beta}}=\underset{{\bf b}}{\arg\min}\|{\bf y}-{\bf Xb}\|_2+\lambda\|{\bf b}\|_1
\end{equation}
For optimal MSE performance $\lambda$ can be set independent of $\sigma^2$ thereby overcoming a major drawback of LASSO. However, the choice of $\lambda$ is still subjective with little guidelines.  The high SNR behaviour of PE for square root LASSO is not reported in the literature. Another interesting development in this area is the development of sparse iterative covariance-based estimation,  popularly called as SPICE\cite{spice}. SPICE is a convex optimization based algorithm that is completely devoid of any hyper parameters. The relationship between SPICE and techniques like LAD-LASSO, square root LASSO and LASSO are  derived in \cite{spice_connection,spicenote}. Another tuning parameter free algorithm called LIKES which is closely related to SPICE is proposed in \cite{spice_like}. Another interesting contribution in this area is the derivation of analytical properties of the non negative least squares (NNLS) estimator 
\begin{equation}
\hat{\boldsymbol{\beta}}= \underset{{\bf b}\geq {\bf 0}_p}{\arg\min}\|{\bf y}-{\bf Xb}\|_2^2
\end{equation}
in \cite{nnls} which points to the superior performance of NNLS in terms of MSE. However, the NNLS  estimate is applicable only to the cases where the sign pattern of $\boldsymbol{\beta}$ is known \textit{a priori}.  Existing literature on tuning free sparse recovery has many disadvantages. In particular, all these techniques are computationally complex in comparison with simple algorithms like OMP, CoSaMP etc. Notwithstanding the connections established between algorithms like SPICE and LASSO, the performance guarantees of SPICE   are not well established.    
\subsection{Robust regression in the presence of sparse outliers.}
In addition to the  recovery of sparse signals in underdetermined linear regression models (which is the main focus of this article), we also consider a regression model widely popular in robust statistics called sparse outlier model. Here we consider the regression model
\begin{equation}
{\bf y}={\bf X}\boldsymbol{\beta}+{\bf w}+{\bf g},
\end{equation} 
where ${\bf X} \in \mathbb{R}^{n\times p}$ is a full rank design matrix with $n>p$ or $n\gg p$, regression vector $\boldsymbol{\beta}$ may or may not be sparse and inlier noise ${\bf w} \sim \mathcal{N}({\bf 0}_n,\sigma^2{\bf I}_n)$. The outlier noise ${\bf g}$ represents the large   errors in the regression equation that are not modelled by the inlier noise distribution.  In many cases of practical interest, ${\bf g}$ is modelled as sparse, i.e., $n_{out}=|supp({\bf g})|\ll n$. However, the non zero entries in ${\bf g}$ can take large values, i.e., $\|{\bf g}\|_{\infty}$ can be potentially high.   Algorithms from robust statistics like  Hubers' M-est\cite{maronna2006wiley}  were used to solve this problem. Recently, a number of algorithms that utilizes the sparse nature of ${\bf g}$  like the convex optimization based \cite{error_correction_candes,koushik}, SBL based \cite{bdrao_robust}, OMP based greedy algorithm for robust de-noising (GARD)\cite{gard} etc. are shown to  estimate $\boldsymbol{\beta}$ more efficiently than the robust statistics based techniques.  Just like the case of sparse regression, algorithms proposed for robust estimation in the presence of sparse outliers also require tuning parameters that are subjective and dependent on inlier noise variance $\sigma^2$ (which is difficult to estimate). 

\subsection{ Contribution of this article.}
This article makes the following contributions to the CS literature.  We propose a novel way of using the popular OMP called tuning free OMP (TF-OMP) which does not require \textit{a priori} knowledge of sparsity level $k_0$ or noise variance $\sigma^2$ and is completely devoid of any tuning parameters. We analytically establish that the TF-OMP can recover  the true support $\mathcal{I}$ in $l_2$ bounded noise ($\|{\bf w}\|_2\leq \epsilon_2$) if  the matrix ${\bf X}$ satisfy either exact recovery condition (ERC)\cite{tropp2004greed}, mutual incoherence condition (MIC) \cite{cai2011orthogonal} or the restricted isometry condition in \cite{omp_rip_noise} and the minimum non zero value $\boldsymbol{\beta}_{min}=\underset{k \in \mathcal{I}}{\min}|\boldsymbol{\beta}_k|$ is large enough.  It is important to note that the conditions imposed on design matrix ${\bf X}$ for successful support recovery using TF-OMP is no more stringent than   the  results available \cite{cai2011orthogonal,OMP_wang,omp_rip_noise} in literature for OMP with \textit{a priori} knowledge of $k_0$ or noise variance $\sigma^2$.  Under the same set of conditions on matrix ${\bf X}$, TF-OMP is shown  to achieve high SNR consistency\cite{tsp,spl,cs_tsp} in Gaussian noise, i.e.,  $PE \rightarrow 0$ as $\sigma^2 \rightarrow 0$. This is the first time a tuning free CS algorithm is shown to achieve high SNR consistency.  As mentioned before, GARD for estimation in the presence of sparse outliers is closely related to OMP. We extend the operating principle behind TF-OMP to GARD and develop a modified version of GARD called TF-GARD which is devoid of tuning parameters and does not require the knowledge of inlier noise variance $\sigma^2$. Both  proposed algorithms, \textit{viz}. TF-OMP and TF-GARD are numerically shown to achieve highly competitive performance in comparison with a broad class of existing algorithms over a number of experiments.

\subsection{ Notations used. }
 $col({\bf X})$ the column space of ${\bf X}$. ${\bf X}^T$ is the transpose and ${\bf X}^{\dagger}=({\bf X}^T{\bf X})^{-1}{\bf X}^T$ is the  Moore-Penrose pseudo inverse of ${\bf X}$ (if ${\bf X}$ has full column rank). ${\bf P}_{\bf X}={\bf X}{\bf X}^{\dagger}$ is the projection matrix onto $col({\bf X})$.  ${\bf X}_{\mathcal{J}}$ denotes the sub-matrix of ${\bf X}$ formed using  the columns indexed by $\mathcal{J}$. ${\bf X}_{i,j}$ is the $[i,j]^{th}$ entry of ${\bf X}$. If ${\bf X}$ is clear from the context, we use the shorthand ${\bf P}_{\mathcal{J}}$ for ${\bf P}_{{\bf X}_{\mathcal{J}}}$. Both ${\bf a}_{\mathcal{J}}$ and ${\bf a}(\mathcal{J})$ denotes the  entries of ${\bf a}$ indexed by $\mathcal{J}$.   $\chi^2_j$ is a central chi square distribution with $j$ degrees of freedom (d.o.f). $\mathcal{C}\mathcal{N}({\bf u},{\bf C})$ is a complex Gaussian R.V with mean ${\bf u}$ and covariance matrix ${\bf C}$.  ${\bf a}\sim{\bf b}$ implies that ${\bf a}$ and ${\bf b}$ are identically distributed.    $\|{\bf A}\|_{m,l}=\underset{\|{\bf x}\|_m=1}{\max}{\|{\bf Ax}\|_l} $  is the $(m,l)^{th}$ matrix norm. $[p]$ denotes the set $\{1,\dotsc,p\}$. $\floor{x}$ denotes the floor function. $\phi$ represents the null set. For any two index sets $\mathcal{J}_1$ and $\mathcal{J}_2$, the set difference  $\mathcal{J}_1/\mathcal{J}_2=\{j:j \in \mathcal{J}_1\& j\notin  \mathcal{J}_2\}$. For any index set $\mathcal{J}\subseteq [p]$, $\mathcal{J}^C$ denotes the  complement of $\mathcal{J}$ with respect to $[p]$. $f(n)=O(g(n))$ iff $\underset{n \rightarrow \infty}{\lim}\frac{f(n)}{g(n)}<\infty$. 
\subsection{ Organization of this article:-} Section \rom{2} discuss  existing literature on OMP. Section \rom{3} present TF-OMP. Section \rom{4} presents the performance guarantees for TF-OMP. Section \rom{5} discuss TF-GARD algorithm. Section \rom{6} presents the numerical simulations. 

\section{OMP: Prior art}
The proposed tuning parameter free sparse recovery algorithm is based on OMP. OMP is a greedy procedure to perform sparsity constrained least square minimization. OMP starts with a null model and add columns  to current support  that is most correlated with the current residual. An algorithmic description of OMP is given in TABLE \ref{tab:omp}. The performance of OMP is determined by the properties of the measurement matrix ${\bf X}$, ambient SNR, sparsity of $\boldsymbol{\beta}$ ($k_0$) and  stopping condition (SC). We first describe the properties of ${\bf X}$ that are conducive for sparse recovery using OMP. 
\begin{table}
\begin{tabular}{|l|}
\hline
  {\bf Step 1:-} Initialize the residual ${\bf r}^{(0)}={\bf y}$. $\hat{\boldsymbol{\beta}}={\bf 0}_p$,\\ 	\ \ \ \ \ \ \ \ \ \ \   Support estimate ${\mathcal{J}^0}=\phi$, Iteration counter $k=1$; \\
  {\bf Step 2:-} Find the column  most correlated with the current \\
 \ \ \ \ \ \ \ \ \ \ \ \  residual ${\bf r}^{(k-1)}$, i.e., ${t_k}=\underset{t \in \{1,\dotsc,p\}}{\arg\max}|{\bf X}_t^T{\bf r}^{(k-1)}|.$ \\
 {\bf Step 3:-} Update support estimate: ${\mathcal{J}^k}={\mathcal{J}^{k-1}}\cup {t_k}$. \\
  {\bf Step 4:-} Estimate $\boldsymbol{\beta}$ using current support: $\hat{\boldsymbol{\beta}}(\mathcal{J}^k)={\bf X}_{\mathcal{J}^k}^{\dagger}{\bf y}$. \\
  {\bf Step 5:-} Update residual: ${\bf r}^{(k)}={\bf y}-{\bf X}\hat{\boldsymbol{\beta}}=({\bf I}_n-{\bf P}_{\mathcal{J}^k}){\bf y}$. \\
  {\bf Step 6:-} Increment $k$. $k \leftarrow k+1$. \\
  {\bf Step 7:-} Repeat Steps 2-6, until stopping condition (SC)  is  met. \\
  {\bf Output:-} $\hat{\mathcal{I}}=\mathcal{J}^k$ and $\hat{\boldsymbol{\beta}}$. \\
 \hline
\end{tabular}
\caption{ Orthogonal Matching Pursuit}
\label{tab:omp}
\end{table}

\subsection{ Qualifiers for design matrix ${\bf X}$.}
 When $n<p$, the  linear equation ${\bf y}={\bf X}{\boldsymbol{\beta}}$ has infinitely many possible solutions. Hence the support recovery problem is ill-posed even in the noiseless case. To uniquely recover the $k_0$-sparse vector $\boldsymbol{\beta}$, the measurement matrix ${\bf X}$ has to satisfy certain well known regularity conditions.  A plethora of sufficient conditions including restricted isometry property (RIP)\cite{eldar2012compressed,OMP_wang}, mutual incoherence condition (MIC)\cite{tropp2006just,cai2011orthogonal}, exact recovery condition (ERC)\cite{tropp2004greed,tropp2006just} etc. are discussed in the literature.  We first describe the ERC.  

{\bf Definition 1:-} A matrix ${\bf X}$  and a vector $\boldsymbol{\beta}$ with support $\mathcal{I}$ satisfy ERC if the exact recovery coefficient $erc({\bf X},\mathcal{I})=\underset{j \notin \mathcal{I}}{\max}\|{\bf X}_{\mathcal{I}}^{\dagger}{\bf X}_j\|_1$ satisfies $erc({\bf X},\mathcal{I})<1$.

It is known that ERC is a sufficient and worst case necessary condition for accurately recovering  $\mathcal{I}$ from ${\bf y}={\bf X}\boldsymbol{\beta}$ using OMP\cite{tropp2004greed}. The same condition with appropriate scaling of $\boldsymbol{\beta}_{min}$ is sufficient for  recovery in regression models with noise\cite{cai2011orthogonal}. Since  ERC  involves the unknown support $\mathcal{I}$, it is impossible to check ERC in practice. Another important metric used for qualifying ${\bf X}$ is the restricted isometry constant (RIC). RIC of order $k$ denoted  by $\delta_k$ is defined as the smallest value of $0\leq \delta\leq 1$ that satisfies
\begin{equation}
(1-\delta)\|{\bf b}\|_2^2\leq \|{\bf Xb}\|_2^2\leq (1+\delta)\|{\bf b}\|_2^2 
\end{equation}
for all $k$-sparse ${\bf b}\in \mathbb{R}^p$. OMP can recover a $k_0$ sparse signal $\boldsymbol{\beta}$ in the first $k_0$ iterations if $\delta_{k_0+1}<\frac{1}{\sqrt{k_0}+1}$\cite{OMP_wang,wang2012recovery,omp_rip_noise}. In the absence of noise, OMP can recover a $k_0$ sparse $\boldsymbol{\beta}$ in $[3.8k_0]$ iterations if $\delta_{[3.8k_0]}<2 \times 10^{-5}$ \cite{extra}. Likewise, it is possible to recover $\boldsymbol{\beta}$ in $30k_0$ iterations if $\delta_{31k_0}<\dfrac{1}{2}$\cite{extra}.   It is well known that the computation of RIC is NP-hard. Hence, mutual coherence, a quantity that can be estimated easily is widely popular. 
For a matrix ${\bf X}$ with unit $l_2$ norm columns, the  mutual coherence is defined as the maximum pair wise column correlation, i.e., 
\begin{equation}
\mu_{\bf X}=\underset{i\neq j}{\max}|{\bf X}_{i}^T{\bf X}_j|
\end{equation}
If $\mu_{\bf X}<\frac{1}{2k_0-1}$, then for all $k_0$-sparse vector $\boldsymbol{\beta}$, $erc({\bf X},\mathcal{I})$ can be bounded as $erc({\bf X},\mathcal{I})<\frac{k_0\mu_{\bf X}}{1-(k_0-1)\mu_{\bf X}}<1$\cite{tropp2004greed}. Hence,  $\mu_{\bf X}<\frac{1}{2k_0-1}$ is a sufficient condition for both noiseless and noisy sparse recovery using OMP. It is also shown that  $\mu_{\bf X}<\frac{1}{2k_0-1}$ is a worst case necessary condition for  sparse recovery.
\subsection{ Stopping conditions for OMP}
Most of the theoretical properties of OMP are derived assuming either the absence of noise\cite{tropp2004greed,tropp2007signal,extra} or the $\textit{a priori}$ knowledge of $k_0$\cite{OMP_wang}. In this case OMP iterations are terminated once $k=k_0$ or $\|{\bf r}^{(k)}\|_2=0$. When $k_0$ is not available which is typically the case, one has to rely on stopping conditions based on the properties of residual ${\bf r}^{(k)}$. For example, OMP can be stopped if $\|{\bf r}^{(k)}\|_2<\sigma\sqrt{n+2\sqrt{n\log(n)}}$ \cite{cai2011orthogonal,omp_rip_noise} or $\|{\bf X}^T{\bf r}^{(k)}\|_{\infty}<\sigma \sqrt{2\log(p)}$\cite{cai2011orthogonal}.  Likewise, \cite{resdif}  suggested a SC based on the  residual difference ${\bf r}^{(k)}-{\bf r}^{(k-1)}$. The necessary and sufficient conditions for high SNR consistency $(PE \rightarrow 0\  \text{as} \ \ \sigma^2 \rightarrow 0)$ of OMP with  residual based SC is derived in \cite{cs_tsp}. A generalized likelihood ratio based  stopping rule is developed in \cite{Xiong2014}. In addition to the subjectivity involved in the choice of SC,  all the above mentioned SC requires the knowledge of $\sigma^2$. As explained before, estimating $\sigma^2$ in underdetermined regression models is extremely difficult. In the following, we use the shorthand OMP($k_0$) for OMP with \textit{a priori} knowledge of $k_0$ and OMP($\sigma^2)$  for OMP with SC based on \textit{a priori} knowledge of $\sigma^2$.  In the next section, we develop TF-OMP, an OMP based procedure which does not require the knowledge of either $k_0$ or $\sigma^2$ for good performance. 

\section{Tuning Free orthogonal Matching Pursuit.} 
In this section, we present the proposed TF-OMP algorithm. This algorithm is based on the statistic $t(k)=\dfrac{\|{\bf r}^{(k)}\|_2^2}{\|{\bf r}^{(k-1)}\|_2^2}$, where ${\bf r}^{(k)}=({\bf I}_n-{\bf P}_{\mathcal{J}^k}){\bf y}$ is the residual in the $k^{th}$ iteration of OMP.  Using the property ${\bf P}_{\mathcal{J}^k}{\bf P}_{\mathcal{J}^{k-1}}{\bf y}={\bf P}_{\mathcal{J}^{k-1}}{\bf P}_{\mathcal{J}^{k}}{\bf y}={\bf P}_{\mathcal{J}^{k-1}}{\bf y}$ of projection matrices\cite{tsp}, we have $({\bf I}_n-{\bf P}_{\mathcal{J}^{k}})({\bf P}_{\mathcal{J}^{k}}-{\bf P}_{\mathcal{J}^{k-1}})={\bf O}_{n}$, where ${\bf O}_n$ is the $n \times n$ zero matrix. This implies that $\|({\bf I}_n-{\bf P}_{\mathcal{J}^{k-1}}){\bf y}\|_2^2= \|({\bf I}_n-{\bf P}_{\mathcal{J}^{k}}){\bf y}\|_2^2+ \|({\bf P}_{\mathcal{J}^{k}}-{\bf P}_{\mathcal{J}^{k-1}}){\bf y}\|_2^2$.  Hence, $t(k)$ can be rewritten as 
\begin{equation}
t(k)=\dfrac{\|({\bf I}_n-{\bf P}_{\mathcal{J}^{k}}){\bf y}\|_2^2}{\|({\bf I}_n-{\bf P}_{\mathcal{J}^{k}}){\bf y}\|_2^2+ \|({\bf P}_{\mathcal{J}^{k}}-{\bf P}_{\mathcal{J}^{k-1}}){\bf y}\|_2^2}
\end{equation}
Since the residual norms are non decreasing, i.e., $\|{\bf r}^{(k+1)}\|_2\leq \|{\bf r}^{(k)}\|_2$, we always have $0\leq t(k) \leq 1$. This statistic exhibits an interesting behaviour which is the core of our proposed technique, i.e., TF-OMP. Consider running OMP for a number of iterations ${k_{max}}>k_0$ such that neither the matrices $\{{\bf X}_{\mathcal{J}^k}\}_{k=1}^{k_{max}}$ are rank deficient nor the residuals $\{{\bf r}^{(k)}\}_{k=0}^{k_{max}}$ are  zero. Then $t(k)$ varies in the following manner for $1\leq k\leq  k_{max}$.

{\bf Case 1:-)}. {\bf When ${\mathcal J}^{k} \subset \mathcal{I}$:-} Then both $({\bf I}_n-{\bf P}_{\mathcal{J}^{k}}){\bf y}$ and $({\bf P}_{\mathcal{J}^{k}}-{\bf P}_{\mathcal{J}^{k-1}}){\bf y}$ contains contributions from signal ${\bf X\boldsymbol{\beta}}$ and noise ${\bf w}$. Since both numerator and denominator contains noise and signal terms, it is less likely that $t(k)$ takes very low values. \\
{\bf Case 2)}.{\bf When ${\mathcal J}^{k} \supseteq \mathcal{I}$ for the first time:-} In this case $({\bf I}_n-{\bf P}_{\mathcal{J}^{k}}){\bf X\boldsymbol{\beta}}={\bf 0}_n$ and $({\bf P}_{\mathcal{J}^{k}}-{\bf P}_{\mathcal{J}^{k-1}}){\bf X\boldsymbol{\beta}}\neq {\bf 0}_n$. Hence,
numerator has contribution only from the noise ${\bf w}$, whereas, denominator has contributions from both noise and signal ${\bf X\boldsymbol{\beta}}$. Hence, if signal strength is sufficiently high or noise level is low,  $t(k)$ will take very low values. \\
{\bf Case 3:-} {\bf When ${\mathcal J}^{k} \supset \mathcal{I}$:- } In this case both $({\bf I}_n-{\bf P}_{\mathcal{J}^{k}}){\bf X\boldsymbol{\beta}}={\bf 0}_n$ and $({\bf P}_{\mathcal{J}^{k}}-{\bf P}_{\mathcal{J}^{k-1}}){\bf X\boldsymbol{\beta}}={\bf 0}_n$. This means that  both numerator and denominator consists only of noise terms and hence the ratio $t(k)$ will not take very small value even if noise variance is very low. 

To summarize, as SNR improves, the minimal value of $t(k)$ for $1\leq k\leq k_{max}$ will  corresponds to that value of $k$ such that $\mathcal{I}\subseteq \mathcal{J}^{k}$ for the first time with a very high probability.  This point is illustrated in Fig.1  where a typical realization of the  quantity $t(k)$ is plotted for a matrix signal pair $({\bf X},\boldsymbol{\beta})$ satisfying ERC. The signal $\boldsymbol{\beta}$ has non zero values $\pm 1$ and $k_0=3$. At both SNR=10 dB and SNR=30 dB, the minimum value is attained at $k=3$  which is also the first time $\mathcal{I}\subseteq \mathcal{J}^{k}$. Further, the dip in the value of $t(k)$ at $k=3$ becomes more and more pronounced as SNR increases. This motivate the TF-OMP algorithm given in TABLE \rom{2} which try to estimate $\mathcal{I}$ by utilizing the sudden dip in $t(k)$.  
\begin{table}\centering
\label{tab:tf-omp}
\begin{tabular}{|l|}
\hline
{\bf Input:-} Observation ${\bf y}$, design matrix ${\bf X}$. \\
{\bf Step 1:-} Run OMP for $k_{max}$ iterations.\\ 
{\bf Step 2:-} Estimate $k^*=\underset{1\leq k\leq k_{max}-1}{\arg\min} t(k)$. \\

{\bf Step 3:-} Estimate support as $\hat{\mathcal{I}}=\mathcal{J}^{k_{max}}(1:k^*)$. \\  \ \ \ \ \ \ \ \ \ \ \ \ Estimate $\boldsymbol{\beta}$ as $\hat{\boldsymbol{\beta}}(\hat{\mathcal{I}})={\bf X}_{\hat{\mathcal{I}}}^{\dagger}{\bf y}$ and $\hat{\boldsymbol{\beta}}(\hat{\mathcal{I}}^C)={\bf 0}_{p-k^*}$.\\
{\bf Output:-} Support estimate $\hat{\mathcal{I}}$ and signal estimate $\hat{\boldsymbol{\beta}}$. \\
\hline
\end{tabular}
\caption{ Tuning free orthogonal matching pursuit}
\end{table}
We now make the following observations about TF-OMP.

\begin{figure}[htb]
\includegraphics[width=\columnwidth]{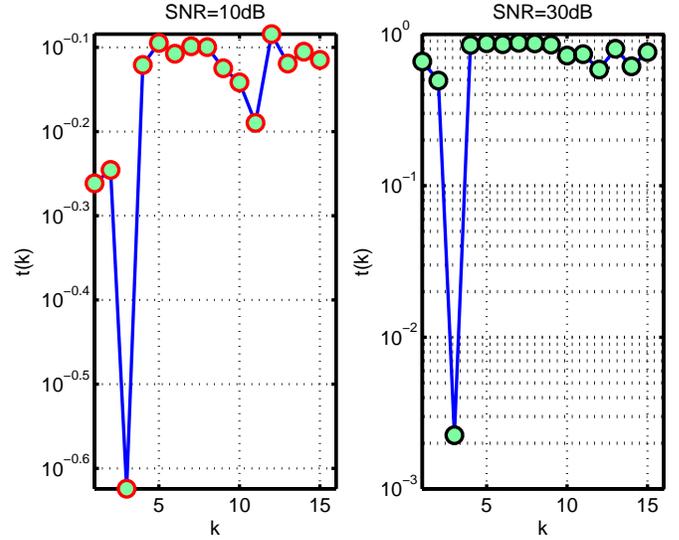}
\caption{ Variation of $t(k)$ vs k for the $32 \times 64$ matrix ${\bf X}=[{\bf I}_n,{\bf H}_n]$ described in Section \rom{6} when $k_0=3$.  }
\label{tab:tf-omp}
\end{figure}

\begin{remark}
It is important to note that the TF-OMP is designed to post facto estimate $k_f$, the first $k$ such that such that $\mathcal{I}\subseteq \mathcal{J}^{k}$ from a sequence of $t(k)$  related to OMP. Note that $k_f$ will correspond to $k_0$ only when the first $k_0$ iterations are accurate, i.e., at each of the first $k_0$ iterations, indices belonging to $\mathcal{I}$ are selected by OMP. Only in that situation will the objective of exact support recovery matches the objective of TF-OMP. When  conditions like RIC, MIC, ERC etc. are satisfied and SNR is high, it is established in \cite{cai2011orthogonal,OMP_wang,omp_rip_noise} that the first $k_0$ iterations are correct with a high probability. Under such circumstances, TF-OMP is trying to estimate $\mathcal{I}$ directly. 
\end{remark}
\begin{remark}
Next consider the situation where  $k_0\neq k_f$, i.e., the first $k_0$ iterations are not accurate. This situation happens in coherent design matrices  at all  SNR and incoherent dictionaries at low SNR. In this situation, all  versions of OMP including OMP($k_0$) fail to deliver accurate support recovery. Indeed, OMP($k_0$) results in missed discoveries (i.e., failure to include non zero entries of $\boldsymbol{\beta}$ in $\hat{\mathcal{I}}$) which cause flooring of MSE as SNR improves. TF-OMP has a qualitatively different behaviour. Since TF-OMP is trying to estimate $k_f$, it will produce a support estimate $\hat{\mathcal{I}}\supset \mathcal{I}$ provided that $\exists  \ k_f>k_0$ that satisfies $\mathcal{I} \subset \mathcal{J}^{k_f}$. Such delayed recovery happens quiet often in coherent dictionaries\cite{extra}. In other words, TF-OMP has a lesser tendency to have missed discoveries, rather it suffers from false discoveries (including non significant indices in $\hat{\mathcal{I}}$). This tendency can result in a degraded MSE performance for TF-OMP at low SNR. However, as SNR improves the effect of false discoveries on MSE decreases, whereas, the effect of missed discoveries become more predominant. Consequently, TF-OMP  suffer less from MSE floors in such situations than OMP($k_0$). To summarise, when there is no congruency between $k_0$ and $k_f$, TF-OMP can potentially deliver better MSE performance than OMP($k_0$) at least in high SNR.  
\end{remark}
\begin{remark} The only user defined parameter in TF-OMP is $k_{max}$. This can be set independent of the signal $\boldsymbol{\beta}$. The only requirement for efficient operation of TF-OMP is that $k_{max}>k_0$, $\{{\bf X}_{\mathcal{J}^k}\}_{k=1}^{k_{max}}$ are full rank and the residuals $\{{\bf r}^{(k)}\}_{k=0}^{k_{max}}$ are not zero.  It is impossible to ascertain \textit{a priori} when the matrices become rank deficient or residuals become zero. Hence, one can set $k_{max}=n$ (since ${\bf X}_{\mathcal{J}^{n+1}}$ is rank deficient w.p.1) initially and terminate iterations when any of the aforementioned contingencies happen. However, the maximum value of $k_0$ for a fixed $n$ that can be recovered using any sparse recovery technique is $\floor{\dfrac{n+1}{2}}$. This follows from the fact that Spark of ${\bf X}$ satisfies $spark({\bf X})\leq n+1$ (equality for equiangular tight frames) and $k_0<\floor{\dfrac{spark({\bf X})}{2}}$ is  a necessary  condition for sparse recovery using any algorithm\cite{elad_book}.  Hence, instead of $k_{max}=n$, it is sufficient to set $k_{max}=\floor{\dfrac{n}{2}}$ and this is the value of $k_{max}$  used in our simulations. Needless to say, if one has \textit{a priori} knowledge of maximum value of $k_0$ (not the exact value of $k_0$), $k_{max}$ can be set to that value also.  
\end{remark}
\subsection{Computational complexity of TF-OMP}
The computational complexity of TF-OMP with $k_{max}=\floor{\dfrac{n}{2}}$ is $O(n^2p)$  which is higher than the $O(npk_0)$ complexity  of OMP($k_0$).   This is the cost one has to pay for not knowing $k_0$ or $\sigma^2$ \textit{a priori}.  However, TF-OMP is computationally much more efficient than either the second order conic programming (SOCP) or cyclic algorithm based implementation of the popular tuning free SPICE algorithm\cite{spice_like}. Even the cyclic algorithm based implementation of SPICE which is claimed to be computationally efficient (in comparison with SOCP) in small and medium sized problems involve multiple iterations and in each iteration it requires the inversion of a  $n \times n$ matrix ($O(n^3)$ complexity) and a matrix matrix multiplication of complexity $O((n+p)^{3})$. It is possible to reduce the complexity of TF-OMP by producing upper bounds on $k_0$ that is lower than the $k_{max}=\floor{\dfrac{n}{2}}$ used in TF-OMP.  Assuming \textit{a priori} knowledge of an upper bound $k_{up} \geq  k_0$ is a significantly weaker assumption than having exact \textit{a priori} knowledge of $k_0$. If one can produce an upper bound $k_{up}\geq k_0$ satisfying   $k_{up}=O(k_0)$, then setting $k_{max}=k_{up}$  in TF-OMP gives the OMP($k_0$) complexity of $O(k_0np)$.  

For situations where the statistician is completely oblivious to $k_0$, we propose two low complexity versions of TF-OMP, \textit{viz.}, QTF-OMP1 (quasi tuning free OMP) and QTF-OMP2 that uses a  value of $k_{max}$ lower than the $k_{max}=\floor{\dfrac{n}{2}}$ used in TF-OMP.  QTF-OMP1 uses $k_{max}=1+\floor{\sqrt{\dfrac{n(p-1)}{p-n}}}$ and QTF-OMP2 uses $k_{max}=\floor{n/\log(p)}$. QTF-OMP1 is motivated by the fact that the best coherence based guarantee for OMP extends upto $k_0\leq \dfrac{1}{2}(1+\dfrac{1}{\mu_{\bf X}})$ and $\mu_{\bf X}$ for any $n \times p$ matrix satisfies $\mu_{\bf X}\geq \sqrt{\dfrac{p-n}{n(p-1)}}$\cite{elad_book}. Hence, QTF-OMP1 uses a value of $k_{max}$ which is two times higher than the maximum value of $k_0$ that can be covered by  the coherence based guarantees available for OMP. Likewise, the best known asymptotic guarantee for OMP states that OMP can recover any $k_0$ sparse signal when $(n,p,k_0) \rightarrow \infty$ if $n=2k_0(1+\delta)\log(p-k_0)$, where $\delta>0$ is any arbitrary value\cite{brownian}. Hence, when $p\gg k_0$, the highest value of $k_0$ one can reliably detect  using OMP asymptotically is $\dfrac{n}{2\log(p)}$. The value of $k_{max}$ used in QTF-OMP1 and QTF-OMP2 is twice of the aforementioned maximum detectable values of $k_0$ to add sufficient robustness. The complexity of  QTF-OMP1 and QTF-OMP2 are $O\left(np\sqrt{\dfrac{n(p-1)}{p-n}}\right)$ and $O\left(n^2\dfrac{p}{\log(p)}\right)$ which is significantly lower than the $O(n^2p)$ complexity  of TF-OMP. Unlike TF-OMP which is completely tuning free, QTF-OMP1 and QTF-OMP2 involves  a subjective choice of $k_{max}$ (though motivated by theoretical properties). The rest of this article consider TF-OMP only and in Section \rom{6} we demonstrate that the performance of TF-OMP, QTF-OMP1 and QTF-OMP2 are similar across multiple experiments.

\section{ Analysis of  TF-OMP} 
In this section we will mathematically analyse various factors that will influence the performance of TF-OMP. In particular we discuss the conditions for successful recovery of a $k_0$-sparse vector in $l_2$  bounded noise   $\|{\bf w}\|_2\leq \epsilon_2$. Note that the Gaussian vector ${\bf w}\sim \mathcal{N}({\bf 0}_n,\sigma^2{\bf I}_n)$ is  essentially bounded in the sense that $\mathbb{P}\left(\|{\bf w}\|_2>\sigma\sqrt{n+2\sqrt{n\log(n)}}\right)\leq \dfrac{1}{n}$. Hence with $\epsilon_2=\sigma\sqrt{n+2\sqrt{n\log(n)}}$, this analysis is applicable to Gaussian noise too.   For bounded noise, we define the SNR as $SNR_b=\dfrac{\|{\bf X}\boldsymbol{\beta}\|_2^2}{\epsilon_2^2}$.  We next state and  prove a theorem  regarding the  successful support recovery by TF-OMP in bounded noise. Note that the accurate support recovery automatically translate to a MSE performance equivalent to that of an oracle with \textit{a priori} knowledge of support $\mathcal{I}$. Throughout this section, we use $t(k)$ to denote the ratio ${\|{\bf r}^{(k)}\|_2}/{\|{\bf r}^{(k-1)}\|_2}$ instead of ${\|{\bf r}^{(k)}\|_2^2}/{\|{\bf r}^{(k-1)}\|_2^2}$. 

\begin{thm} For any matrix signal pair $({\bf X},\boldsymbol{\beta})$ satisfying ERC, MIC or RIP with $\delta_{k_0+1}\leq \dfrac{1}{\sqrt{k_0}+1}$, TF-OMP with $k_{max}>k_0$ will recover the correct support in bounded noise if the SNR ($SNR_b$) is sufficiently high. 
\end{thm}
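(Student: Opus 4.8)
\emph{Proof plan.} The plan is to show that for all sufficiently small $\epsilon_2$ (equivalently, all sufficiently large $SNR_b$) two things happen: (a) the first $k_0$ OMP iterations each add a fresh index of $\mathcal I$, so that $\mathcal J^{k_0}=\mathcal I$; and (b) the minimiser $k^{*}=\argmin_{1\le k\le k_{max}-1}t(k)$ equals $k_0$. Together these give $\hat{\mathcal I}=\mathcal J^{k_{max}}(1:k_0)=\mathcal J^{k_0}=\mathcal I$, which is the claim.

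For (a) I would invoke the existing noisy exact-recovery guarantees for OMP: under ERC \cite{tropp2004greed}, MIC \cite{cai2011orthogonal}, or $\delta_{k_0+1}\le 1/(\sqrt{k_0}+1)$ \cite{OMP_wang,omp_rip_noise}, whenever $\epsilon_2$ is below a constant multiple of $\boldsymbol{\beta}_{min}$ (the constant depending only on the relevant coherence/RIC quantity) the first $k_0$ iterations are correct. Each of the three hypotheses forces $\mathbf{X}_{\mathcal I}$ to have full column rank, so every $\mathbf{P}_{\mathcal J^k}$ with $\mathcal J^k\subseteq\mathcal I$ is well defined, and OMP with a nonzero residual (the non-degeneracy already required for TF-OMP to be well posed, which holds with probability one for Gaussian $\mathbf{w}$ since $k_{max}<n$) keeps $\mathbf{X}_{\mathcal J^k}$ full rank for $k>k_0$ as well. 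Hence $\mathbf{r}^{(k)}=(\mathbf{I}_n-\mathbf{P}_{\mathcal J^k})\mathbf{X}\boldsymbol{\beta}+(\mathbf{I}_n-\mathbf{P}_{\mathcal J^k})\mathbf{w}$ with $(\mathbf{I}_n-\mathbf{P}_{\mathcal J^k})\mathbf{X}\boldsymbol{\beta}\ne\mathbf{0}_n$ for $0\le k<k_0$ (there $\mathcal J^k\subsetneq\mathcal I$), while $\mathbf{r}^{(k)}=(\mathbf{I}_n-\mathbf{P}_{\mathcal J^k})\mathbf{w}$ for $k\ge k_0$ (there $\mathcal J^k\supseteq\mathcal I$).

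Next I would bound $t(k)$ in three regimes, using $C_{min}=\min_{\mathcal J\subsetneq\mathcal I}\|(\mathbf{I}_n-\mathbf{P}_{\mathcal J})\mathbf{X}\boldsymbol{\beta}\|_2$ and $C_{max}=\|\mathbf{X}\boldsymbol{\beta}\|_2$; here $C_{min}>0$ because $\mathbf{X}\boldsymbol{\beta}=\mathbf{X}_{\mathcal I}\boldsymbol{\beta}_{\mathcal I}$ with every entry of $\boldsymbol{\beta}_{\mathcal I}$ nonzero cannot lie in $\mathrm{col}(\mathbf{X}_{\mathcal J})$ for a proper subset $\mathcal J\subsetneq\mathcal I$ (full column rank of $\mathbf{X}_{\mathcal I}$). (i) At $k=k_0$: $\|(\mathbf{I}_n-\mathbf{P}_{\mathcal I})\mathbf{w}\|_2\le\epsilon_2$ and $\|\mathbf{r}^{(k_0-1)}\|_2\ge C_{min}-\epsilon_2$, so $t(k_0)\le\epsilon_2/(C_{min}-\epsilon_2)\to 0$. (ii) For $1\le k<k_0$: $\|\mathbf{r}^{(k)}\|_2\ge C_{min}-\epsilon_2$ and $\|\mathbf{r}^{(k-1)}\|_2\le C_{max}+\epsilon_2$, so $t(k)\ge(C_{min}-\epsilon_2)/(C_{max}+\epsilon_2)$, bounded away from $0$. (iii) For $k_0<k\le k_{max}-1$: $t(k)=\|(\mathbf{I}_n-\mathbf{P}_{\mathcal J^k})\mathbf{w}\|_2/\|(\mathbf{I}_n-\mathbf{P}_{\mathcal J^{k-1}})\mathbf{w}\|_2$; writing $\mathbf{w}=\epsilon\mathbf{v}$ with $\|\mathbf{v}\|_2=1$, this ratio and the post-$k_0$ selections $t_{k_0+1},\dots,t_{k_{max}}$ depend on $\mathbf{v}$ alone (the residual $\mathbf{r}^{(k_0)}=(\mathbf{I}_n-\mathbf{P}_{\mathcal I})\epsilon\mathbf{v}$ is homogeneous of degree one in $\mathbf{w}$, so $\argmax$-based choices and scale-invariant ratios are unchanged), hence are independent of $SNR_b$ and strictly positive; let $\rho_{min}>0$ be their minimum.

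Finally I would conclude that $k^{*}=k_0$ as soon as $t(k_0)$ lies strictly below both the regime-(ii) bound and $\rho_{min}$, i.e. as soon as $\epsilon_2(C_{max}+\epsilon_2)<(C_{min}-\epsilon_2)^2$ and $\epsilon_2<\rho_{min}C_{min}/(1+\rho_{min})$; these, together with the threshold from step (a), all hold once $\epsilon_2$ is below an explicit quantity depending on $(\mathbf{X},\boldsymbol{\beta})$, the direction of $\mathbf{w}$, and the coherence/RIC constant. Since $\epsilon_2^2=\|\mathbf{X}\boldsymbol{\beta}\|_2^2/SNR_b$, this is exactly a lower bound on $SNR_b$, after which $\hat{\mathcal I}=\mathcal J^{k_{max}}(1:k_0)=\mathcal I$. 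I expect regime (iii) to be the main obstacle: unlike $t(k_0)$ it is \emph{not} driven to zero by raising the SNR, so one cannot make all competing $t(k)$ large; the resolution is the scale-invariance observation above, namely that both the greedy choices and the value of $t(k)$ past iteration $k_0$ are frozen as $SNR_b\to\infty$, so it suffices that they be nonzero, which is precisely the non-degeneracy already assumed for TF-OMP.
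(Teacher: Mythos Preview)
Your proposal is correct and follows the same three-event decomposition as the paper: (E1) the first $k_0$ iterations are exact, (E2) $t(k)>t(k_0)$ for $k<k_0$, and (E3) $t(k)>t(k_0)$ for $k>k_0$, with the scale-invariance observation handling (E3) in both arguments.

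The one substantive difference is in how you bound the residuals for regimes (i)--(ii). You use the projection-based constants $C_{min}=\min_{\mathcal{J}\subsetneq\mathcal{I}}\|(\mathbf I_n-\mathbf P_{\mathcal J})\mathbf X\boldsymbol\beta\|_2$ and $C_{max}=\|\mathbf X\boldsymbol\beta\|_2$, which is clean and perfectly sufficient for the qualitative statement. The paper instead invokes the residual bound $\lambda_{min}\|\boldsymbol\beta_{u^k}\|_2-\epsilon_2\le\|\mathbf r^{(k)}\|_2\le\lambda_{max}\|\boldsymbol\beta_{u^k}\|_2+\epsilon_2$ from \cite{cai2011orthogonal} together with a triangle-inequality step $\|\boldsymbol\beta_{u^{k-1}}\|_2\le\|\boldsymbol\beta_{u^k}\|_2+\boldsymbol\beta_{max}$, obtaining an explicit threshold $\epsilon^b=\lambda_{min}\boldsymbol\beta_{min}\big/\big(1+2\tfrac{\lambda_{max}}{\lambda_{min}}+\tfrac{\lambda_{max}}{\lambda_{min}}\tfrac{\boldsymbol\beta_{max}}{\boldsymbol\beta_{min}}\big)$. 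What the paper's route buys is an explicit dependence of the admissible noise level on the dynamic range $\boldsymbol\beta_{max}/\boldsymbol\beta_{min}$, which it then exploits qualitatively (TF-OMP degrades for signals with large $\boldsymbol\beta_{max}/\boldsymbol\beta_{min}$); your constants $C_{min},C_{max}$ hide this dependence. Conversely, your argument is shorter and avoids the somewhat artificial minimisation over $\|\boldsymbol\beta_{u^k}\|_2$. For regime (iii) the paper additionally takes a minimum over the finite set of possible post-$k_0$ index sequences to obtain a $\mathbf w$-independent constant $\Gamma(\mathbf X,\mathcal I)$, whereas your $\rho_{min}$ still depends on the direction $\mathbf v$ of $\mathbf w$; either version suffices for the theorem as stated.
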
  
\emph{Proof:-} The analysis of TF-OMP is based on the fundamental results developed in the  \cite{cai2011orthogonal} and \cite{omp_rip_noise} stated next.
\subsection{ A brief review of relevant results from \cite{cai2011orthogonal} and \cite{omp_rip_noise}.}
 Let $\lambda_{min}$ and $\lambda_{max}$ denotes the minimum and maximum eigenvalues of ${\bf X}_{\mathcal{I}}^T{\bf X}_{\mathcal{I}}$ respectively. 
\begin{lemma} If $\lambda_{min}>0$ and $erc({\bf X},\mathcal{I})<1$, then the following  statements hold true\cite{cai2011orthogonal}.\\  
{\bf A1):-} $\lambda_{min}\|{\bf \boldsymbol{\beta}}_{u^k}\|_2-\epsilon_2\leq \|{\bf r}^{(k)}\|_2\leq \lambda_{max}\|{\bf \boldsymbol{\beta}}_{u^k}\|_2+\epsilon_2$, $1\leq k\leq k_0$. Here ${u^k}=\mathcal{I}/\mathcal{J}^k$  denotes the indices in $\mathcal{I}$ that are not selected after the $k^{th}$ iteration. \\
{\bf A2):-} $\epsilon_2\leq \dfrac{\boldsymbol{\beta}_{min}\lambda_{min}(1-erc({\bf X},\mathcal{I}))}{2}$ implies that the first $k_0$ iterations are correct, i.e., $\{t_1,\dots,t_{k_0}\}=\mathcal{I}$. 
 \end{lemma}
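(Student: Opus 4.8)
\emph{Proof proposal.} The plan is to prove the two assertions separately: A1 is a deterministic two-sided bound on the residual norm that holds whenever the selected set consists of true-support indices, while A2 is an inductive greedy-selection argument showing exactly that this happens during the first $k_0$ iterations. For A1, on the event $\mathcal{J}^k\subseteq\mathcal{I}$ I would first split the signal as ${\bf X}\boldsymbol{\beta}={\bf X}_{\mathcal{J}^k}\boldsymbol{\beta}_{\mathcal{J}^k}+{\bf X}_{u^k}\boldsymbol{\beta}_{u^k}$, where $u^k=\mathcal{I}/\mathcal{J}^k$. The first term lies in $col({\bf X}_{\mathcal{J}^k})$ and is annihilated by ${\bf I}_n-{\bf P}_{\mathcal{J}^k}$, so ${\bf r}^{(k)}=({\bf I}_n-{\bf P}_{\mathcal{J}^k}){\bf X}_{u^k}\boldsymbol{\beta}_{u^k}+({\bf I}_n-{\bf P}_{\mathcal{J}^k}){\bf w}$. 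Because the projection is non-expansive, $\|({\bf I}_n-{\bf P}_{\mathcal{J}^k}){\bf w}\|_2\leq\|{\bf w}\|_2\leq\epsilon_2$, and two applications of the triangle inequality place $\|{\bf r}^{(k)}\|_2$ within $\pm\epsilon_2$ of the signal part $\|({\bf I}_n-{\bf P}_{\mathcal{J}^k}){\bf X}_{u^k}\boldsymbol{\beta}_{u^k}\|_2$.

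To finish A1 I would identify $\|({\bf I}_n-{\bf P}_{\mathcal{J}^k}){\bf X}_{u^k}\boldsymbol{\beta}_{u^k}\|_2^2=\boldsymbol{\beta}_{u^k}^T{\bf G}\boldsymbol{\beta}_{u^k}$, where ${\bf G}={\bf X}_{u^k}^T({\bf I}_n-{\bf P}_{\mathcal{J}^k}){\bf X}_{u^k}$ is precisely the Schur complement of the $\mathcal{J}^k$ block inside the Gram matrix ${\bf X}_{\mathcal{I}}^T{\bf X}_{\mathcal{I}}$ (here $\lambda_{min}>0$ guarantees invertibility of the relevant sub-Gram matrix). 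A standard Schur-complement fact gives $\lambda_{min}({\bf G})\geq\lambda_{min}$, while ${\bf G}\preceq{\bf X}_{u^k}^T{\bf X}_{u^k}\preceq\lambda_{max}{\bf I}$ by positivity of ${\bf X}_{u^k}^T{\bf P}_{\mathcal{J}^k}{\bf X}_{u^k}$ and eigenvalue interlacing. A Rayleigh-quotient sandwich then controls the signal norm by the extreme eigenvalues of ${\bf X}_{\mathcal{I}}^T{\bf X}_{\mathcal{I}}$, and combined with the $\epsilon_2$ noise bound this yields A1 for $1\leq k\leq k_0$.

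For A2 I would induct on the claim that the first $k_0$ selections lie in $\mathcal{I}$; the base case is $\mathcal{J}^0=\phi$. Assuming $\mathcal{J}^{k-1}\subseteq\mathcal{I}$, correct selection at step $k$ amounts to $\max_{j\in u^{k-1}}|{\bf X}_j^T{\bf r}^{(k-1)}|>\max_{j\notin\mathcal{I}}|{\bf X}_j^T{\bf r}^{(k-1)}|$. I would split ${\bf r}^{(k-1)}={\bf r}_s+{\bf r}_n$ with ${\bf r}_s=({\bf I}_n-{\bf P}_{\mathcal{J}^{k-1}}){\bf X}_{u^{k-1}}\boldsymbol{\beta}_{u^{k-1}}\in col({\bf X}_{\mathcal{I}})$ and ${\bf r}_n=({\bf I}_n-{\bf P}_{\mathcal{J}^{k-1}}){\bf w}$. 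For the signal part, Tropp's ERC identity for vectors in $col({\bf X}_{\mathcal{I}})$ gives $\max_{j\notin\mathcal{I}}|{\bf X}_j^T{\bf r}_s|\leq erc({\bf X},\mathcal{I})\max_{j\in\mathcal{I}}|{\bf X}_j^T{\bf r}_s|$; for the noise part, $|{\bf X}_j^T{\bf r}_n|\leq\epsilon_2$ for every $j$ by unit-norm columns and non-expansiveness. Writing $M_s=\max_{j\in\mathcal{I}}|{\bf X}_j^T{\bf r}_s|=\|{\bf X}_{u^{k-1}}^T{\bf r}_s\|_{\infty}$ (the $\mathcal{J}^{k-1}$ indices contribute zero since ${\bf r}_s\perp col({\bf X}_{\mathcal{J}^{k-1}})$), correct selection is guaranteed as soon as $(1-erc({\bf X},\mathcal{I}))M_s>2\epsilon_2$.

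The crux, and the step I expect to be the main obstacle, is lower bounding $M_s\geq\lambda_{min}\boldsymbol{\beta}_{min}$, since the raw ERC perturbation argument controls ratios but not the absolute size of the in-support correlations. Here I would again use ${\bf X}_{u^{k-1}}^T{\bf r}_s={\bf G}\boldsymbol{\beta}_{u^{k-1}}$ with ${\bf G}\succeq\lambda_{min}{\bf I}$ and the elementary chain $\|{\bf G}\boldsymbol{\beta}_{u^{k-1}}\|_{\infty}\geq \boldsymbol{\beta}_{u^{k-1}}^T{\bf G}\boldsymbol{\beta}_{u^{k-1}}/\|\boldsymbol{\beta}_{u^{k-1}}\|_1\geq \lambda_{min}\|\boldsymbol{\beta}_{u^{k-1}}\|_2^2/\|\boldsymbol{\beta}_{u^{k-1}}\|_1\geq \lambda_{min}\|\boldsymbol{\beta}_{u^{k-1}}\|_2/\sqrt{|u^{k-1}|}\geq\lambda_{min}\boldsymbol{\beta}_{min}$, the last step using $\|\boldsymbol{\beta}_{u^{k-1}}\|_2\geq\sqrt{|u^{k-1}|}\,\boldsymbol{\beta}_{min}$. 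Substituting into the selection condition, the hypothesis $\epsilon_2\leq\boldsymbol{\beta}_{min}\lambda_{min}(1-erc({\bf X},\mathcal{I}))/2$ is exactly what forces $(1-erc({\bf X},\mathcal{I}))M_s\geq2\epsilon_2$, closing the induction and giving $\{t_1,\dots,t_{k_0}\}=\mathcal{I}$. I would finally dispose of the strict-inequality/tie-breaking detail at the boundary, and remark that the RIP hypothesis $\delta_{k_0+1}\leq 1/(\sqrt{k_0}+1)$ quoted in the theorem enters only through the companion estimate of \cite{omp_rip_noise} and not through this ERC-based lemma, whose only design-matrix assumptions are $\lambda_{min}>0$ and $erc({\bf X},\mathcal{I})<1$.
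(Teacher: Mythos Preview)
The paper does not prove this lemma; it is quoted verbatim as a background result from \cite{cai2011orthogonal} (with the RIP analogue drawn from \cite{omp_rip_noise}) and then used as input to the proof of Theorem~1. There is therefore no ``paper's own proof'' of Lemma~1 to compare your proposal against.

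That said, your reconstruction is essentially the standard Cai--Wang argument and is sound. The Schur-complement identification ${\bf G}={\bf X}_{u^k}^T({\bf I}_n-{\bf P}_{\mathcal{J}^k}){\bf X}_{u^k}$ together with the eigenvalue interlacing $\lambda_{min}\leq\lambda_{min}({\bf G})$ and $\lambda_{max}({\bf G})\leq\lambda_{max}$ is the right tool for A1, and your inductive ERC perturbation for A2, including the H\"older/Cauchy--Schwarz chain
\[
\|{\bf G}\boldsymbol{\beta}_{u^{k-1}}\|_{\infty}\geq\frac{\boldsymbol{\beta}_{u^{k-1}}^T{\bf G}\boldsymbol{\beta}_{u^{k-1}}}{\|\boldsymbol{\beta}_{u^{k-1}}\|_1}\geq\frac{\lambda_{min}\|\boldsymbol{\beta}_{u^{k-1}}\|_2^2}{\sqrt{|u^{k-1}|}\,\|\boldsymbol{\beta}_{u^{k-1}}\|_2}\geq\lambda_{min}\boldsymbol{\beta}_{min},
\]
is correct. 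One caveat worth flagging: the Rayleigh-quotient sandwich in your A1 argument actually produces $\sqrt{\lambda_{min}}\|\boldsymbol{\beta}_{u^k}\|_2$ and $\sqrt{\lambda_{max}}\|\boldsymbol{\beta}_{u^k}\|_2$ (singular values, not eigenvalues), whereas the lemma as printed here has $\lambda_{min},\lambda_{max}$ without square roots. This appears to be a notational slip in the paper's restatement of the Cai--Wang bound rather than an error in your reasoning, but you should keep track of it if you ever need the sharp constants downstream.
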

A1) shows how to bound the residual norms used in $t(k)$ based on $\lambda_{max}$ and $\lambda_{min}$. A2)  implies that  the first $k_0$ iterations of OMP will be correct if $SNR_b$ is sufficiently high and ERC  is satisfied. We now state conditions similar to A1)-A2) in terms of MIC and RIC.

\begin{lemma}
 MIC $\left(\mu_{\bf X} \leq \dfrac{1}{2k_0-1}\right)$ implies that $ erc({\bf X},\mathcal{I})\leq \dfrac{1-(k_0-1)\mu_{\bf X}}{1+(k_0-1)\mu_{\bf X}}<1$ and $0<1-(k_0-1)\mu_{\bf X}<\lambda_{min}\leq\lambda_{max}\leq 1+(k_0-1)\mu_{\bf X}$\cite{tropp2004greed}. Substituting these bounds in  A1) and A2) gives \\
{\bf B1):-} $\left(1-(k_0-1)\mu_{\bf X}\right)\|{\bf \boldsymbol{\beta}}_{u^k}\|_2-\epsilon_2\leq \|{\bf r}^{(k)}\|_2\leq \left(1+(k_0-1)\mu_{\bf X}\right)\|{\bf \boldsymbol{\beta}}_{u^k}\|_2+\epsilon_2$, $1\leq k\leq k_0$. \\
{\bf B2):-} $ \epsilon_2\leq \dfrac{\boldsymbol{\beta}_{min}(1-(2k_0-1)\mu_{\bf X})}{2}$ implies that the first $k_0$ iterations are correct, i.e., $\{t_1,\dots,t_{k_0}\}=\mathcal{I}$.
\end{lemma}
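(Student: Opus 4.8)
The plan is to derive everything in Lemma~2 by combining the classical coherence estimates for the Gram matrix of ${\bf X}_{\mathcal{I}}$ with Lemma~1. Concretely I would (i) re-establish the quoted bounds on $erc({\bf X},\mathcal{I})$, $\lambda_{min}$ and $\lambda_{max}$ under MIC, and (ii) substitute them into A1) and A2) to obtain B1) and B2).

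For step (i): write ${\bf G}={\bf X}_{\mathcal{I}}^{T}{\bf X}_{\mathcal{I}}={\bf I}_{k_0}+{\bf E}$, where ${\bf E}$ has zero diagonal (the columns of ${\bf X}$ are unit-norm) and off-diagonal entries of modulus at most $\mu_{\bf X}$, so every Gershgorin disc of ${\bf G}$ is centred at $1$ with radius at most $(k_0-1)\mu_{\bf X}$; this yields $1-(k_0-1)\mu_{\bf X}\le\lambda_{min}\le\lambda_{max}\le 1+(k_0-1)\mu_{\bf X}$, with $\lambda_{min}>0$ because MIC forces $(k_0-1)\mu_{\bf X}\le\frac{k_0-1}{2k_0-1}<1$. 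For the exact recovery coefficient, for any $j\notin\mathcal{I}$, $\|{\bf X}_{\mathcal{I}}^{\dagger}{\bf X}_j\|_1=\|{\bf G}^{-1}{\bf X}_{\mathcal{I}}^{T}{\bf X}_j\|_1\le\|{\bf G}^{-1}\|_{1,1}\,\|{\bf X}_{\mathcal{I}}^{T}{\bf X}_j\|_1$, where $\|{\bf X}_{\mathcal{I}}^{T}{\bf X}_j\|_1\le k_0\mu_{\bf X}$ (each of the $k_0$ inner products has modulus at most $\mu_{\bf X}$) and, by a Neumann-series bound, $\|{\bf G}^{-1}\|_{1,1}\le\sum_{m\ge 0}\big((k_0-1)\mu_{\bf X}\big)^{m}=\big(1-(k_0-1)\mu_{\bf X}\big)^{-1}$, the series converging precisely under MIC. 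Hence $erc({\bf X},\mathcal{I})\le\frac{k_0\mu_{\bf X}}{1-(k_0-1)\mu_{\bf X}}$, which is $<1$ exactly when $(2k_0-1)\mu_{\bf X}<1$, i.e. under MIC; this is the estimate already quoted earlier, and it, together with the eigenvalue bounds, is the content of \cite{tropp2004greed}. I would spell it out only to have explicit quantities to plug in.

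For step (ii): B1) is immediate --- replacing $\lambda_{min}$ on the left of A1) by the smaller quantity $1-(k_0-1)\mu_{\bf X}$ and $\lambda_{max}$ on the right by the larger $1+(k_0-1)\mu_{\bf X}$ gives B1) verbatim for $1\le k\le k_0$. For B2) I would verify that the MIC-form bound $\epsilon_2\le\frac{\boldsymbol{\beta}_{min}(1-(2k_0-1)\mu_{\bf X})}{2}$ implies the hypothesis $\epsilon_2\le\frac{\boldsymbol{\beta}_{min}\lambda_{min}(1-erc({\bf X},\mathcal{I}))}{2}$ of A2), for which it suffices that $\lambda_{min}\big(1-erc({\bf X},\mathcal{I})\big)\ge 1-(2k_0-1)\mu_{\bf X}$, and then invoke A2) to conclude $\{t_1,\dots,t_{k_0}\}=\mathcal{I}$. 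This last inequality is the only step needing care: bounding the two factors separately is too lossy (the factor $1-erc$ alone can be small), so I would instead keep the common denominator, using $1-erc({\bf X},\mathcal{I})\ge 1-\frac{k_0\mu_{\bf X}}{1-(k_0-1)\mu_{\bf X}}=\frac{1-(2k_0-1)\mu_{\bf X}}{1-(k_0-1)\mu_{\bf X}}$ and then multiplying by $\lambda_{min}\ge 1-(k_0-1)\mu_{\bf X}$, so that the denominator cancels and exactly $1-(2k_0-1)\mu_{\bf X}$ survives (every quantity is positive under MIC, so the multiplication of lower bounds is legitimate). Thus the routine work is step (i) --- Gershgorin plus a Neumann series --- and the single load-bearing manipulation is the denominator cancellation in the B2) step, which is what lets B2) come out with the clean threshold $1-(2k_0-1)\mu_{\bf X}$.
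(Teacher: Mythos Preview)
Your approach is correct and matches the paper's, which gives no proof beyond citing \cite{tropp2004greed} for the coherence bounds on $erc({\bf X},\mathcal{I})$, $\lambda_{min}$, $\lambda_{max}$ and then asserting that substitution into A1)--A2) yields B1)--B2); your Gershgorin and Neumann-series arguments and the denominator-cancellation step for B2) are exactly the details the paper omits. Note that the $erc$ bound you derive, $erc({\bf X},\mathcal{I})\le k_0\mu_{\bf X}/(1-(k_0-1)\mu_{\bf X})$, is the one the paper itself quotes earlier (and the one actually used to obtain B2)), so the displayed expression in the lemma statement appears to be a typo rather than a different bound you need to reproduce.
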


\begin{lemma}
RIC $\delta_{k_0}<1$ implies that $1-\delta_{k_0}\leq\lambda_{min}\leq \lambda_{max}\leq 1+\delta_{k_0}$\cite{omp_rip_noise}. Substituting this in A1) gives \\
{\bf C1):-} $(1-\delta_{k_0})\|{\bf \boldsymbol{\beta}}_{u^k}\|_2-\epsilon_2\leq \|{\bf r}^{(k)}\|_2\leq (1+\delta_{k_0})\|{\bf \boldsymbol{\beta}}_{u^k}\|_2+\epsilon_2$, $1\leq k\leq k_0$. \\
The next statement follows from Theorem 1 of \cite{omp_rip_noise}.\\
{\bf C2):-} If $\delta_{k_0+1}< \dfrac{1}{\sqrt{k_0}+1}$, then $\epsilon_2\leq \dfrac{\boldsymbol{\beta}_{min}(1-(\sqrt{k_0}+1)\delta_{k_0+1})}{\sqrt{1+\delta_{k_0+1}}+1}$ implies that the first $k_0$ iterations are correct.
\end{lemma}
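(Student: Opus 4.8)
The plan is to establish the two assertions of the lemma separately: the spectral bound on $\mathbf{X}_{\mathcal{I}}^T\mathbf{X}_{\mathcal{I}}$ together with C1, which is a substitution into A1 of Lemma 1, and then C2, which is a matter of matching constants to Theorem 1 of \cite{omp_rip_noise}.

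First I would convert the hypothesis $\delta_{k_0}<1$ into a statement about the eigenvalues of the Gram matrix $\mathbf{X}_{\mathcal{I}}^T\mathbf{X}_{\mathcal{I}}$. Since $|\mathcal{I}|=k_0$, every vector supported on $\mathcal{I}$ is $k_0$-sparse, so the defining two-sided RIP inequality specializes to $(1-\delta_{k_0})\|\mathbf{v}\|_2^2\leq\mathbf{v}^T\mathbf{X}_{\mathcal{I}}^T\mathbf{X}_{\mathcal{I}}\mathbf{v}\leq(1+\delta_{k_0})\|\mathbf{v}\|_2^2$ for all $\mathbf{v}\in\mathbb{R}^{k_0}$. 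Applying the Rayleigh quotient characterization of the extreme eigenvalues of the symmetric matrix $\mathbf{X}_{\mathcal{I}}^T\mathbf{X}_{\mathcal{I}}$ then sandwiches them, giving $1-\delta_{k_0}\leq\lambda_{min}\leq\lambda_{max}\leq 1+\delta_{k_0}$. In particular $\lambda_{min}\geq 1-\delta_{k_0}>0$, so the positivity hypothesis needed by Lemma 1 is secured.

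With the eigenvalue bounds in hand, C1 follows from A1. One subtlety deserves attention: A1 in Lemma 1 is derived under $erc(\mathbf{X},\mathcal{I})<1$, a condition used there only to guarantee that the first $k$ selected indices lie in $\mathcal{I}$. Under RIP that correctness is instead supplied by C2; granting it, the residual decomposition $\mathbf{r}^{(k)}=(\mathbf{I}_n-\mathbf{P}_{\mathcal{J}^k})\mathbf{X}_{u^k}\boldsymbol{\beta}_{u^k}+(\mathbf{I}_n-\mathbf{P}_{\mathcal{J}^k})\mathbf{w}$, combined with $\|\mathbf{w}\|_2\leq\epsilon_2$ and the spectral bounds on $\mathbf{X}_{\mathcal{I}}^T\mathbf{X}_{\mathcal{I}}$, reproduces the same two-sided estimate as A1 but with the eigenvalues replaced by their RIP envelopes. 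Concretely, substituting $1-\delta_{k_0}$ for $\lambda_{min}$ in the lower bound of A1 and $1+\delta_{k_0}$ for $\lambda_{max}$ in its upper bound yields $(1-\delta_{k_0})\|\boldsymbol{\beta}_{u^k}\|_2-\epsilon_2\leq\|\mathbf{r}^{(k)}\|_2\leq(1+\delta_{k_0})\|\boldsymbol{\beta}_{u^k}\|_2+\epsilon_2$ for $1\leq k\leq k_0$, which is exactly C1.

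For C2 I would not re-derive the RIP recovery guarantee; I would invoke Theorem 1 of \cite{omp_rip_noise}, which asserts that whenever $\delta_{k_0+1}<\frac{1}{\sqrt{k_0}+1}$, OMP selects a correct index at each of its first $k_0$ iterations provided the noise energy is small relative to $\boldsymbol{\beta}_{min}$. The remaining work is to line up that theorem's admissible noise level with the closed form $\frac{\boldsymbol{\beta}_{min}(1-(\sqrt{k_0}+1)\delta_{k_0+1})}{\sqrt{1+\delta_{k_0+1}}+1}$ and confirm that $\|\mathbf{w}\|_2\leq\epsilon_2$ below this threshold meets its requirement, delivering $\{t_1,\dots,t_{k_0}\}=\mathcal{I}$. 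I expect this last step to be the main obstacle: because Theorem 1 of \cite{omp_rip_noise} may encode its sufficient condition through a slightly different but equivalent noise descriptor, the crux is verifying that the stated threshold---in particular the $\sqrt{1+\delta_{k_0+1}}+1$ denominator---is precisely the one implied there rather than a loosened surrogate. By contrast, the eigenvalue bound and the substitution producing C1 are routine.
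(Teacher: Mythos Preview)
Your proposal is correct and matches the paper's approach: the paper itself offers no proof beyond the two remarks ``Substituting this in A1)'' and ``follows from Theorem~1 of \cite{omp_rip_noise}'', and your write-up simply supplies the standard details behind those remarks (RIP $\Rightarrow$ Rayleigh-quotient eigenvalue sandwich, then plug into A1; cite \cite{omp_rip_noise} for C2). Your observation that A1 tacitly requires $\mathcal{J}^k\subseteq\mathcal{I}$, which under RIP is furnished by C2 rather than by ERC, is a valid clarification of the logical dependence that the paper leaves implicit.
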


Since the analysis based on $erc({\bf X},\mathcal{I})$ and $\{\lambda_{min},\lambda_{max}\}$ are more general than MIC or RIC,  we explain TF-OMP using $erc({\bf X},\mathcal{I})$ and $\{\lambda_{min},\lambda_{max}\}$. However, as outlined in B1)-B2) and C1)-C2), this analysis can be easily replaced by $\mu_{\bf X}$ and $\{\delta_{k_0},\delta_{k_0+1}\}$. 
 
\subsection{Sufficient conditions for sparse recovery using TF-OMP}
The successful recovery of support of $\boldsymbol{\beta}$ using TF-OMP requires the simultaneous occurrence of the events E1)-E3) given below.\\
E1). The first $k_0$ iterations are correct, i.e., $\{t_1,\dots,t_{k_0}\}=\mathcal{I}$. \\
E2). $t(k)>t(k_0),\  \text{for} \ 1\leq k\leq k_0-1$. \\
E3). $t(k)>t(k_0), \  \text{for} \ k_0+1\leq k\leq k_{max}$. \\
E1) implies that  OMP with \textit{a priori} knowledge of $k_0$, i.e., OMP($k_0)$ can perform exact sparse recovery, whereas, E2) and E3) implies that TF-OMP will be free from missed and false discoveries respectively. Note that the condition  A2)  implies that  the event E1) occurs as long as $\lambda_{min}>0$, $erc({\bf X},\mathcal{I})<1$ and $\epsilon_2$ is below a particular level $\epsilon^a$ given by
 \begin{equation}
 \epsilon^a=\dfrac{\boldsymbol{\beta}_{min}\lambda_{min}(1-erc({\bf X},\mathcal{I}))}{2}.
 \end{equation}
  Next we consider the events E2)  and E3) assuming that the noise ${\bf w}$ satisfies $\|{\bf w}\|_2 \leq \epsilon^a$, i.e., E1) is true. To establish $t(k_0)<t(k)$ for $k \neq k_0$, we produce an upper bound on $t(k_0)$ and lower bounds on $t(k)$ for $k \neq k_0$ and show that the upper bound on $t(k_0)$ is lower than the lower bound on $t(k)$ for $k\neq k_0$ at high SNR. We first consider the event E2). Since all $k_0$ entries in $\boldsymbol{\beta}$ are selected in the first $k_0$ iterations $u^{k_0}=\phi$  and hence $\|\boldsymbol{\beta}_{ u^{k_0}}\|_2=0$. Likewise, only one entry in $\boldsymbol{\beta}$ is left out after $k_0-1$ iterations. Hence, $|u^{k_0-1}|=1$ and  $\|\boldsymbol{\beta}_{u^{k_0-1}}\|_2\geq \boldsymbol{\beta}_{min}$. Substituting these values  in A1) of Lemma 1, we have $\|{\bf r}^{(k_0)}\|_2\leq \epsilon_2$ and $\|{\bf r}^{(k_0-1)}\|_2\geq \lambda_{min}\boldsymbol{\beta}_{min}-\epsilon_2$. Hence, $t(k_0)$ is  bounded by
\begin{equation}\label{ub_tk0}
t(k_0)=\dfrac{\|{\bf r}^{(k_0)}\|_2}{\|{\bf r}^{(k_0-1)}\|_2} \leq \dfrac{\epsilon_2}{\lambda_{min}\boldsymbol{\beta}_{min}-\epsilon_2}, \forall \epsilon_2<\epsilon^a.
\end{equation}
Next we lower bound $t(k)$ for $k<k_0$. Note that $\boldsymbol{\beta}_{u^{k-1}}=\boldsymbol{\beta}_{u^{k}}+\boldsymbol{\beta}_{u^{k-1}/u^{k}}$
after appending enough zeros in appropriate locations. Further, $\|\boldsymbol{\beta}_{u^{k-1}/u^{k}}\|_2=|\boldsymbol{\beta}_{u^{k-1}/u^{k}}|\leq \boldsymbol{\beta}_{max}$, where $\boldsymbol{\beta}_{max}=\underset{k \in \mathcal{I}}{\max}|\boldsymbol{\beta}_k|$. Applying triangle inequality to $\boldsymbol{\beta}_{u^{k-1}}=\boldsymbol{\beta}_{u^{k}}+\boldsymbol{\beta}_{u^{k-1}/u^{k}}$ gives
the bound 
\begin{equation}\label{temp_bound}
\|\boldsymbol{\beta}_{u^{k-1}}\|_2\leq \|\boldsymbol{\beta}_{u^{k}}\|_2+\|\boldsymbol{\beta}_{u^{k-1}/u^{k}}\|_2 \leq  \|\boldsymbol{\beta}_{u^{k}}\|_2+  \boldsymbol{\beta}_{max}
\end{equation}
Applying (\ref{temp_bound}) in $t(k)$ gives
\begin{equation}\label{A1bound}
\begin{array}{ll}
t(k)=\dfrac{\|{\bf r}^{(k)}\|_2}{\|{\bf r}^{(k-1)}\|_2} &\geq \dfrac{\lambda_{min}\|\boldsymbol{\beta}_{u^k}\|_2-\epsilon_2}{\lambda_{max}\|\boldsymbol{\beta}_{u^{k-1}}\|_2+\epsilon_2}\\
&\geq \dfrac{\lambda_{min}\|\boldsymbol{\beta}_{u^k}\|_2-\epsilon_2}{\lambda_{max}(\|\boldsymbol{\beta}_{u^{k}}\|_2+\boldsymbol{\beta}_{max})+\epsilon_2}\\
\end{array}
\end{equation}
for $k<k_0$ and $\epsilon_2\leq \epsilon_a$. The R.H.S of (\ref{A1bound}) can be rewritten as
\begin{equation}\label{A1bound2}
\dfrac{\lambda_{min}\|\boldsymbol{\beta}_{u^k}\|_2-\epsilon_2}{\lambda_{max}(\|\boldsymbol{\beta}_{u^{k}}\|_2+\boldsymbol{\beta}_{max})+\epsilon_2}=\dfrac{\lambda_{min}}{\lambda_{max}}\left(1-\dfrac{\dfrac{\epsilon_2}{\lambda_{min}}+\dfrac{\epsilon_2}{\lambda_{max}}+\boldsymbol{\beta}_{max}}{\|\boldsymbol{\beta}_{u^{k}}\|_2+\boldsymbol{\beta}_{max}+\dfrac{\epsilon_2}{\lambda_{max}}}\right)
\end{equation}
From (\ref{A1bound2}) it is clear that the R.H.S of (\ref{A1bound}) decreases with decreasing $\|\boldsymbol{\beta}_{u^k}\|_2$.  Note  that the minimum value of $\|\boldsymbol{\beta}_{u^{k}}\|_2$ is $\boldsymbol{\beta}_{min}$ itself. This leads to an even smaller lower bound on $t(k)$ for $k<k_0$ given by
\begin{equation}\label{lb_on_klessk0}
t(k)\geq \dfrac{\lambda_{min}\boldsymbol{\beta}_{min}-\epsilon_2}{\lambda_{max}(\boldsymbol{\beta}_{max}+\boldsymbol{\beta}_{min})+\epsilon_2},\  \ \forall k<k_0\  \text{and} \  \epsilon_2<\epsilon^a.
\end{equation}
For E2) to happen it is sufficient that the lower bound on $t(k)$ for $k<k_0$ is larger than the upper bound on $t(k_0)$, i.e., 
\begin{equation}
\dfrac{\lambda_{min}\boldsymbol{\beta}_{min}-\epsilon_2}{\lambda_{max}(\boldsymbol{\beta}_{max}+\boldsymbol{\beta}_{min})+\epsilon_2} \geq  \dfrac{\epsilon_2}{\lambda_{min}\boldsymbol{\beta}_{min}-\epsilon_2}.
\end{equation}
This will happen if $\epsilon_2\leq \epsilon^b$, where 
\begin{equation}\label{epsb}
\epsilon^b= \dfrac{\lambda_{min}\boldsymbol{\beta}_{min}}{1+2\dfrac{\lambda_{max}}{\lambda_{min}} + \dfrac{\lambda_{max}}{\lambda_{min}} \dfrac{\boldsymbol{\beta}_{max}}{\boldsymbol{\beta}_{min}} }. 
\end{equation}
In  words, whenever $\epsilon_2\leq \min(\epsilon^a,\epsilon^b)$, TF-OMP will not have any missed discoveries. 

Next we consider the event E3) and assume again that $\epsilon_2<\epsilon^a$. Since, the first $k_0$ iterations are correct, $({\bf I}_n-{\bf P}_k){\bf y}=({\bf I}_n-{\bf P}_k){\bf w}$ for $k\geq k_0$.  Note that the quantity $t(k)=\dfrac{\|({\bf I}-{\bf P}_k){\bf w}\|_2}{\|({\bf I}-{\bf P}_{k-1}){\bf w}\|_2}$  is independent of the scaling factor $\epsilon_2$.  Hence, define the quantity 
\begin{equation}
\Gamma({\bf X}, \mathcal{A})=\underset{\underset{  \|{\bf z}\|_2=1}{k_0<k\leq k_{max}} }{\min}\dfrac{\|({\bf I}-{\bf P}_k){\bf z}\|_2}{\|({\bf I}-{\bf P}_{k-1}){\bf z}\|_2}.
\end{equation}
where $\mathcal{A}=\{t_{k_0},\dotsc, t_{k_{max}}\}\subset\{1,\dotsc,p\}$ is an ordered set representing the indices selected by OMP. By the definition of $\Gamma({\bf X}, \mathcal{A})$,  $t(k)\geq \Gamma({\bf X},\mathcal{A}),\forall k>k_0$.  $\Gamma({\bf X},\mathcal{A})$ is a random variable  depending on the indices $\mathcal{A}=\{t_{k_0},\dotsc,t_{k_{max}}\}$ which depends  on the noise vector ${\bf w}$. However, ${\bf w}$ influences  $\Gamma({\bf X},\mathcal{A})$ only through $\mathcal{A}$. Since  $\{{\bf P}_k\}_{k=k_0+1}^{k_{max}}$ depends on ${\bf w}$, it is difficult to characterize $\Gamma({\bf X},\mathcal{A})$. TF-OMP stops before ${\bf r}^{(k)}={\bf 0}_n$ deterministically and hence  it is true that $\Gamma({\bf X},\mathcal{A})>\gamma_{\mathcal{A}}>0$ for each of the possible realization of ${\bf w}$ or equivalently, each possible realization $\mathcal{A}$.  Further, the set  of all possible $\mathcal{A}$ denoted by $\tilde{\mathcal{A}}$ is large, but finite. This implies that $\Gamma({\bf X},\mathcal{I})=\underset{\mathcal{A} \in \tilde{\mathcal{A}}}{\min}\Gamma({\bf X},\mathcal{A})\geq \underset{\mathcal{A} \in \tilde{\mathcal{A}}}{\min}\gamma_{\mathcal{A}}>0$. This implies that $t(k)\geq \Gamma({\bf X},\mathcal{I})>0$ with probability one for all $k>k_0$ and $\Gamma({\bf X},\mathcal{I})$ is independent of $\epsilon_2$.  At the same time, the bound $t(k_0)\leq \dfrac{\epsilon_2}{\lambda_{min}\boldsymbol{\beta}_{min}-\epsilon_2}$ on $t(k_0)$ decreases to zero with decreasing $\epsilon_2$. Hence, $\exists \epsilon^c>0$ given by
\begin{equation}\label{epsilon_c}
\epsilon^c=\dfrac{\lambda_{min}\boldsymbol{\beta}_{min}\Gamma({\bf X},\mathcal{I})}{1+\Gamma({\bf X},\mathcal{I})}
\end{equation} 
such that   $t(k_0)<t(k)$ for all $k>k_0$ whenever $\epsilon_2<\min(\epsilon^a,\epsilon^c)$. In words, TF-OMP will not make false discoveries whenever $\epsilon_2<\min(\epsilon^a,\epsilon^c)$. Combining all the required conditions, we can see that TF-OMP will recover the correct support whenever $0<\epsilon_2<\epsilon_{min}=min(\epsilon^a,\epsilon^b,\epsilon^c)$. In words, for any support $\mathcal{I}$ satisfying ERC,  $\exists SNR_b^{\mathcal{I}}<\infty$, such that TF-OMP will recover $\mathcal{I}$ whenever $SNR_b>SNR_b^{\mathcal{I}}$. Hence proved. $\blacksquare$  \\
We now make some  remarks about the performance of TF-OMP.
\begin{remark}
The  conditions on the matrix support pair $({\bf X},\mathcal{I})$ for the successful support recovery using TF-OMP is exactly same as the MIC, ERC and RIC based conditions outlined for OMP($\sigma^2$) and OMP($k_0$). From the expressions of $\epsilon_b$ in (\ref{epsb}) and $\epsilon_c$ in (\ref{epsilon_c}), it is difficult to ascertain whether the $min(\epsilon_b,\epsilon_c)<\epsilon_a$.  In other words, it is difficult to state whether the required SNR for successful recovery using TF-OMP is higher than that required for OMP($k_0$) or OMP($\sigma^2$). However, extensive numerical simulations indicate that except in  the very low SNR regime, TF-OMP performs very closely compared to OMP($k_0$).  This comparatively poor performance at low SNR can be directly attributed to the lack of knowledge of $k_0$ or $\sigma^2$.  Note that the analysis in this article is worst case and qualitative in nature. Deriving exact conditions on $\epsilon_2$ for successful recovery will be more difficult and is not pursued in this article.   
\end{remark}
\begin{remark}
The bound (\ref{epsb}) involves the term $\dfrac{\boldsymbol{\beta}_{max}}{\boldsymbol{\beta}_{min}}$ in the denominator. In particular, (\ref{epsb}) implies that the noise level that allows for successful recovery, i.e., $\epsilon_{min}$ decreases with increasing $\dfrac{\boldsymbol{\beta}_{max}}{\boldsymbol{\beta}_{min}}$.  This is the main qualitative difference between TF-OMP and the  results in \cite{cai2011orthogonal} and \cite{wang2012recovery} available for OMP($\sigma^2$) and OMP($k_0$).  This term can be attributed to the sudden fall in the residual power when a ``very significant" entry in $\boldsymbol{\beta}$ is covered by OMP at an intermediate iteration which mimic the fall in residual power when the  ``last" entry in $\boldsymbol{\beta}$ is selected in the $k_0$ iteration. Note that the later fall in residual power is what TF-OMP trying to detect. The main implication of this result is that the TF-OMP will be lesser effective while recovering $\boldsymbol{\beta}$ with significant variations (high $\dfrac{\boldsymbol{\beta}_{max}}{\boldsymbol{\beta}_{min}}$ ratio) than in recovering signals with lesser variations. 
\end{remark}
\subsection{High SNR consistency of TF-OMP in Gaussian noise. }
The high SNR consistency of variable selection techniques in Gaussian noise has received considerable attention in signal processing community recently\cite{ding2011inconsistency,tsp,spl,cs_tsp}. High SNR consistency is formally defined as follows. \\
{\bf Definition:-} A support recovery technique is high SNR consistent iff
 the probability of support recovery error (PE) satisfies $\underset{\sigma^2 \rightarrow 0 }{\lim}PE=0$.  \\
The following lemma stated and proved in \cite{cs_tsp} establish the necessary and sufficient condition for the high SNR consistency of OMP and LASSO. 
 \begin{lemma}\label{cs_lemma}
 LASSO in (\ref{lasso}) is high SNR consistent for any matrix signal pair $({\bf X},\boldsymbol{\beta})$ satisfying ERC if $\underset{\sigma^2\rightarrow 0}{\lim}{\lambda}=0$ and $\underset{\sigma^2\rightarrow 0}{\lim}{\dfrac{\lambda}{\sigma}}=\infty$. OMP with SC that terminate iterations when $\|{\bf r}^{(k)}\|_2\leq \gamma$ or $\|{\bf X}^T{\bf r}^{(k)}\|_{\infty}\leq \gamma$ are high SNR consistent if $\underset{\sigma^2 \rightarrow 0}{\lim}\gamma=0$ and $\underset{\sigma^2 \rightarrow 0}{\lim}\dfrac{\gamma}{\sigma}=\infty$. 
 \end{lemma}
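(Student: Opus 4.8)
\emph{Proof strategy.} Fix a matrix--signal pair $({\bf X},\boldsymbol{\beta})$ satisfying ERC, so that $\lambda_{min}>0$, $\lambda_{max}$, $erc({\bf X},\mathcal{I})<1$, $k_0$ and $\boldsymbol{\beta}_{min}$ are all fixed positive constants. For each of the three procedures --- LASSO and OMP with either stopping rule --- the plan is to exhibit a deterministic event $G_\sigma$ on which the recovered support equals $\mathcal{I}$ exactly, so that $PE\le 1-\P(G_\sigma)$, and then to show $\P(G_\sigma)\to 1$ as $\sigma^2\to 0$ using only the elementary Gaussian tail facts $\P(\|{\bf w}\|_2>\eta)=\P(\chi^2_n>\eta^2/\sigma^2)$ and, since the columns of ${\bf X}$ have unit norm, $\P(\|{\bf X}^T{\bf w}\|_\infty>\eta)\le 2p\exp(-\eta^2/(2\sigma^2))$. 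Both right-hand sides vanish as $\sigma\to 0$ whenever $\eta=\eta(\sigma)$ satisfies $\eta/\sigma\to\infty$, even if $\eta\to 0$.

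For OMP with the stopping rule $\|{\bf r}^{(k)}\|_2\le\gamma$, take $G_\sigma=\{\|{\bf w}\|_2\le\gamma\}$. Since $\gamma\to 0$, for all small enough $\sigma$ we have $\gamma\le\min\{\epsilon^a,\tfrac12\lambda_{min}\boldsymbol{\beta}_{min}\}$ with $\epsilon^a=\tfrac12\boldsymbol{\beta}_{min}\lambda_{min}(1-erc({\bf X},\mathcal{I}))$. On $G_\sigma$: (i) $\|{\bf w}\|_2\le\epsilon^a$, so by A2) of Lemma 1 the first $k_0$ selections are exactly $\mathcal{I}$; (ii) for $1\le k<k_0$, A1) of Lemma 1 gives $\|{\bf r}^{(k)}\|_2\ge\lambda_{min}\|\boldsymbol{\beta}_{u^k}\|_2-\|{\bf w}\|_2\ge\lambda_{min}\boldsymbol{\beta}_{min}-\gamma>\gamma$, so OMP does not stop before iteration $k_0$; (iii) after $k_0$ correct steps ${\bf r}^{(k_0)}=({\bf I}_n-{\bf P}_{k_0}){\bf w}$, hence $\|{\bf r}^{(k_0)}\|_2\le\|{\bf w}\|_2\le\gamma$ and OMP halts with $\hat{\mathcal{I}}=\mathcal{J}^{k_0}=\mathcal{I}$. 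Since $\gamma/\sigma\to\infty$ forces $\gamma^2/\sigma^2\to\infty$, $\P(G_\sigma)=\P(\chi^2_n\le\gamma^2/\sigma^2)\to 1$, i.e. $PE\to 0$. The rule $\|{\bf X}^T{\bf r}^{(k)}\|_\infty\le\gamma$ is handled identically: for halting at $k_0$ use $\|{\bf X}^T{\bf r}^{(k_0)}\|_\infty\le\max_i\|{\bf X}_i\|_2\|({\bf I}_n-{\bf P}_{k_0}){\bf w}\|_2\le\|{\bf w}\|_2\le\gamma$; for not halting at $k<k_0$, note that on $G_\sigma$ the OMP selection at step $k$ picks an index of $\mathcal{I}$, so $\|{\bf X}^T{\bf r}^{(k)}\|_\infty$ equals the correlation attained on $\mathcal{I}$, which the standard OMP identifiability estimate of \cite{cai2011orthogonal} (the analysis underlying A1)--A2) of Lemma 1) bounds below by a fixed positive constant minus an $O(\|{\bf w}\|_2)$ term, and hence exceeds $\gamma$ for small $\sigma$.

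For LASSO in (\ref{lasso}) the plan is the primal--dual (witness) argument. Let $\tilde{\boldsymbol{\beta}}$ be supported on $\mathcal{I}$ with $\tilde{\boldsymbol{\beta}}_{\mathcal{I}}=\arg\min_{{\bf b}}\tfrac12\|{\bf y}-{\bf X}_{\mathcal{I}}{\bf b}\|_2^2+\lambda\|{\bf b}\|_1$; by the KKT conditions $\tilde{\boldsymbol{\beta}}$ is the (unique) LASSO minimizer provided the dual feasibility bound $\|{\bf X}_{\mathcal{I}^C}^T({\bf y}-{\bf X}_{\mathcal{I}}\tilde{\boldsymbol{\beta}}_{\mathcal{I}})\|_\infty<\lambda$ holds and $\tilde{\boldsymbol{\beta}}_{\mathcal{I}}$ has no vanishing entry. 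The restricted KKT condition gives $\tilde{\boldsymbol{\beta}}_{\mathcal{I}}-\boldsymbol{\beta}_{\mathcal{I}}={\bf X}_{\mathcal{I}}^{\dagger}{\bf w}-\lambda({\bf X}_{\mathcal{I}}^T{\bf X}_{\mathcal{I}})^{-1}{\bf s}$ with $\|{\bf s}\|_\infty=1$, hence ${\bf y}-{\bf X}_{\mathcal{I}}\tilde{\boldsymbol{\beta}}_{\mathcal{I}}=({\bf I}_n-{\bf P}_{\mathcal{I}}){\bf w}+\lambda{\bf X}_{\mathcal{I}}({\bf X}_{\mathcal{I}}^T{\bf X}_{\mathcal{I}})^{-1}{\bf s}$; using $\max_{j\notin\mathcal{I}}\|{\bf X}_{\mathcal{I}}^{\dagger}{\bf X}_j\|_1=erc({\bf X},\mathcal{I})$ and $|{\bf X}_j^T({\bf I}_n-{\bf P}_{\mathcal{I}}){\bf w}|\le\|{\bf w}\|_2$, the dual quantity is at most $erc({\bf X},\mathcal{I})\lambda+\|{\bf w}\|_2$, which is $<\lambda$ as soon as $\|{\bf w}\|_2<(1-erc({\bf X},\mathcal{I}))\lambda$. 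Likewise $\|\tilde{\boldsymbol{\beta}}_{\mathcal{I}}-\boldsymbol{\beta}_{\mathcal{I}}\|_\infty\le\lambda_{min}^{-1/2}\|{\bf w}\|_2+\lambda_{min}^{-1}\sqrt{k_0}\lambda$, which is $<\boldsymbol{\beta}_{min}$ once $\|{\bf w}\|_2$ and $\lambda$ are small, so then $\tilde{\boldsymbol{\beta}}_{\mathcal{I}}$ is entrywise nonzero with the correct signs and $\supp(\tilde{\boldsymbol{\beta}})=\mathcal{I}$. Thus on $G_\sigma'=\{\|{\bf w}\|_2\le c\lambda\}$ with $c=\tfrac12(1-erc({\bf X},\mathcal{I}))$, and for $\sigma$ small enough that $\lambda$ falls below a fixed multiple of $\boldsymbol{\beta}_{min}$ (possible because $\lambda\to 0$), the LASSO support is exactly $\mathcal{I}$; and $\P(G_\sigma')=\P(\chi^2_n\le c^2\lambda^2/\sigma^2)\to 1$ because $\lambda/\sigma\to\infty$. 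Therefore $PE\to 0$.

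I expect the LASSO half to be the main obstacle: making the witness construction airtight requires verifying that the constructed $\tilde{\boldsymbol{\beta}}$ is the \emph{unique} minimizer (strict dual feasibility together with invertibility of ${\bf X}_{\mathcal{I}}^T{\bf X}_{\mathcal{I}}$ suffices) and carrying out the sign bookkeeping carefully, whereas the OMP half is a short deterministic consequence of Lemma 1 plus a $\chi^2$ tail bound. If the converse mentioned in the preamble --- necessity of $\lambda\to 0$, $\lambda/\sigma\to\infty$ (resp. $\gamma\to 0$, $\gamma/\sigma\to\infty$) --- is also wanted, the plan would be: when $\lambda$ (resp. $\gamma$) does not vanish, exhibit a missed discovery of a small nonzero entry $\boldsymbol{\beta}_k$ with probability bounded away from $0$; and when $\lambda/\sigma$ (resp. $\gamma/\sigma$) stays bounded, exhibit a false discovery driven by $\|{\bf X}_{\mathcal{I}^C}^T{\bf w}\|_\infty$ being comparable to the threshold with probability bounded away from $0$. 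This converse is not needed for the high-SNR-consistency statement used for TF-OMP, so I would treat it only briefly.
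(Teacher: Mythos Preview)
The paper does not prove this lemma at all: it is quoted verbatim from \cite{cs_tsp} (``The following lemma stated and proved in \cite{cs_tsp} \ldots'') and used only as a black box to motivate Theorem~2. So there is no in-paper proof to compare your proposal against.

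That said, your sketch is along the right lines and is essentially what a self-contained proof would look like. The OMP half is a clean deterministic consequence of A1)--A2) of Lemma~1 together with the $\chi^2$ tail, exactly as you wrote; the only small omission is the non-stopping check at $k=0$ (i.e., $\|{\bf r}^{(0)}\|_2>\gamma$), which is immediate from $\|{\bf y}\|_2\ge\sqrt{\lambda_{min}}\,\|\boldsymbol{\beta}\|_2-\|{\bf w}\|_2$. For the $\|{\bf X}^T{\bf r}^{(k)}\|_\infty$ rule you would in a full write-up cite the explicit lower bound $\|{\bf X}_{\mathcal{I}}^T{\bf r}^{(k)}\|_\infty\ge c\,\boldsymbol{\beta}_{min}-O(\|{\bf w}\|_2)$ from \cite{cai2011orthogonal} rather than invoke it informally. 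The LASSO half via the primal--dual witness is standard and your bookkeeping (the $erc({\bf X},\mathcal{I})\lambda+\|{\bf w}\|_2$ bound on the dual quantity, and the $\lambda_{min}^{-1/2}\|{\bf w}\|_2+\lambda_{min}^{-1}\sqrt{k_0}\,\lambda$ bound on $\|\tilde{\boldsymbol{\beta}}_{\mathcal{I}}-\boldsymbol{\beta}_{\mathcal{I}}\|_\infty$) is correct; uniqueness follows from strict dual feasibility plus invertibility of ${\bf X}_{\mathcal{I}}^T{\bf X}_{\mathcal{I}}$, as you note. Your remark that the necessity direction is not used here is also accurate.
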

 Lemma \ref{cs_lemma} implies that LASSO and OMP with residual based SC are high SNR consistent iff the tuning parameters are adapted according to $\sigma^2$. In particular, Lemma \ref{cs_lemma} implies that widely used parameters for LASSO like $\lambda= \sigma\sqrt{2\log(p)}$ in \cite{candes2009near} and OMP SC with $\gamma=\sigma\sqrt{n+2\sqrt{n\log(n)}}$ are inconsistent at high SNR. In the following theorem, we state and prove the high SNR consistency of TF-OMP. To the best of our knowledge no CS algorithm is shown to achieve high SNR consistency in the absence of knowledge of $\sigma^2$. 
 \begin{thm}TF-OMP is high SNR consistent for any matrix signal pair  $({\bf X},\boldsymbol{\beta})$ that satisfy ERC. 
 \end{thm}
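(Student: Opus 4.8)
The plan is to obtain Theorem 2 as a probabilistic corollary of Theorem 1 together with a Gaussian tail estimate. The proof of Theorem 1 actually delivers a \emph{deterministic} statement: for every realization of the noise vector $\mathbf{w}$, TF-OMP returns $\hat{\mathcal{I}}=\mathcal{I}$ whenever $\|\mathbf{w}\|_2<\epsilon_{min}$, where $\epsilon_{min}=\min(\epsilon^a,\epsilon^b,\epsilon^c)$ is the threshold built in that proof. The first point I would make explicit is that $\epsilon_{min}$ is a strictly positive quantity determined solely by the matrix signal pair $(\mathbf{X},\boldsymbol{\beta})$ --- through $\lambda_{min}$, $\lambda_{max}$, $erc({\bf X},\mathcal{I})$, $\boldsymbol{\beta}_{min}$, $\boldsymbol{\beta}_{max}$, $k_{max}$ and $\Gamma({\bf X},\mathcal{I})$ --- and is in particular \emph{independent of} $\sigma^2$.

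Next I would apply Theorem 1 pathwise to Gaussian noise by taking the deterministic envelope $\epsilon_2$ equal to the realized value $\|\mathbf{w}\|_2$; this is legitimate because the proof of Theorem 1 only ever invokes the inequality $\|\mathbf{w}\|_2\le\epsilon_2$. This gives the event inclusion $\{\hat{\mathcal{I}}\neq\mathcal{I}\}\subseteq\{\|\mathbf{w}\|_2\geq\epsilon_{min}\}$, hence $PE=\mathbb{P}(\hat{\mathcal{I}}\neq\mathcal{I})\leq\mathbb{P}(\|\mathbf{w}\|_2^2\geq\epsilon_{min}^2)$. Since $\mathbf{w}\sim\mathcal{N}(\mathbf{0}_n,\sigma^2\mathbf{I}_n)$ implies $\|\mathbf{w}\|_2^2/\sigma^2\sim\chi^2_n$, the bound becomes $PE\leq\mathbb{P}\!\left(\chi^2_n\geq\epsilon_{min}^2/\sigma^2\right)$; letting $\sigma^2\rightarrow 0$, the threshold $\epsilon_{min}^2/\sigma^2$ diverges while $\epsilon_{min}$ stays fixed, so the right-hand side vanishes and $\underset{\sigma^2\rightarrow 0}{\lim}PE=0$.

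The only genuinely delicate step --- and the one I would write out in detail --- is the claim that $\epsilon_{min}$ does not depend on $\sigma^2$, which comes down to the quantity $\epsilon^c$ and hence to $\Gamma({\bf X},\mathcal{I})$. Although the projectors $\{\mathbf{P}_k\}_{k>k_0}$ appearing there are noise-dependent, the OMP index sequence $\mathcal{A}$ ranges over the \emph{finite} family $\tilde{\mathcal{A}}$, and for each $\mathcal{A}\in\tilde{\mathcal{A}}$ the ratio $\Gamma({\bf X},\mathcal{A})$ is bounded below by some $\gamma_{\mathcal{A}}>0$ precisely because TF-OMP is run for $k_{max}$ iterations over which no residual vanishes and no sub-matrix becomes rank deficient; minimizing over the finite set $\tilde{\mathcal{A}}$ then yields $\Gamma({\bf X},\mathcal{I})>0$, a constant free of $\mathbf{w}$ and a fortiori of $\sigma^2$. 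Everything else --- the event inclusion and the $\chi^2_n$ tail limit --- is routine. I would close with a remark contrasting this with Lemma \ref{cs_lemma}, which shows that OMP with a \emph{fixed} residual-based stopping rule ($\gamma$ not scaled with $\sigma$) is \emph{not} high SNR consistent, thereby highlighting that TF-OMP attains consistency with no knowledge of $\sigma^2$ whatsoever.
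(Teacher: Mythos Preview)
Your proposal is correct and follows essentially the same route as the paper: invoke the deterministic guarantee from Theorem 1 that $\|\mathbf{w}\|_2<\epsilon_{min}$ forces $\hat{\mathcal{I}}=\mathcal{I}$, then bound $PE$ by a $\chi^2_n$ tail probability at threshold $\epsilon_{min}^2/\sigma^2$ and let $\sigma^2\to 0$. Your write-up is in fact more careful than the paper's in spelling out why $\epsilon^c$ (via $\Gamma({\bf X},\mathcal{I})$) is independent of $\sigma^2$, which the paper leaves implicit.
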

 \begin{proof}
 From the analysis of Section \rom{4}-B, we know that TF-OMP recover the correct support whenever $\|{\bf w}\|_2< \epsilon_{min}$, where $\epsilon_{min}>0$ is a function of $\boldsymbol{\beta}_{min},\boldsymbol{\beta}_{max}$ and support $\mathcal{I}$. Hence, $\mathbb{P}(\hat{\mathcal{I}}=\mathcal{I})$ satisfies $\mathbb{P}(\hat{\mathcal{I}}=\mathcal{I})\geq \mathbb{P}(\|{\bf w}\|_2^2\leq \epsilon_{min}^2)=\mathbb{P}(\dfrac{\|{\bf w}\|_2^2}{\sigma^2}\leq \dfrac{\epsilon_{min}^2}{\sigma^2}) $. Note that $T=\dfrac{\|{\bf w}\|_2^2}{\sigma^2}\sim \chi^2_n$. Also the distribution of  $T$ is independent of $\sigma^2$. Further, $T$ is bounded in probability in the sense that  $\underset{l \rightarrow \infty}{\lim}\mathbb{P}(T<a_l)=1$ for any sequence $a_l \rightarrow \infty$. All these implies that  
 \begin{equation}
 \underset{\sigma^2 \rightarrow 0}{\lim}\mathbb{P}(\hat{\mathcal{I}}= \mathcal{I})\geq \underset{\sigma^2 \rightarrow 0}{\lim}\mathbb{P}(T<\dfrac{\epsilon_{min}^2}{\sigma^2})=1.
 \end{equation}
 Hence proved.
 \end{proof}
\section{ Tuning Free Robust Linear Regression in the presence of sparse outliers} 
Throughout this article we have considered a linear regression model ${\bf y}={\bf X}\boldsymbol{\beta}+{\bf w}$ where $\boldsymbol{\beta}$ is a sparse vector and ${\bf w} \sim \mathcal{N}({\bf 0}_n,\sigma^2{\bf I}_n)$ is the noise.  In this section we consider a different regression model
\begin{equation}
{\bf y}={\bf X}\boldsymbol{\beta}+{\bf w}+{\bf g}
\end{equation} 
where ${\bf X} \in \mathbb{R}^{n\times p}$ is a full rank design matrix with $n>p$ or $n\gg p$, regression vector $\boldsymbol{\beta}$ may or may not be sparse and the inlier noise ${\bf w} \sim \mathcal{N}({\bf 0}_n,\sigma^2{\bf I}_n)$. The outlier noise ${\bf g}$ represents the large   errors in the regression equation that is not modelled using the inlier noise distribution. In addition to SNR,  this regression model also require signal to interference ratio  (SIR) given by
\begin{equation}
\text{SIR}=\dfrac{\|{\bf X\boldsymbol{\beta}}\|_2^2}{\|{\bf g}\|_2^2}.
\end{equation} 
to quantify the impact of outliers. In many cases of practical interest ${\bf g}$ is modelled as sparse, i.e., $n_{out}=|supp({\bf g})|\ll n$. However, ${\bf g}$ can have very large power, i.e., SIR can be very low\cite{error_correction_candes,bdrao_robust,maronna2006wiley,koushik}.  A classic example of this is channel estimation in OFDM systems in the presence of narrow band interference\cite{NBI}.  In spite of the full rank of ${\bf X}$, traditional least squares (LS) estimate of $\boldsymbol{\beta}$ given by $\hat{\bf \boldsymbol{\beta}}={\bf X}^{\dagger}{\bf y}$ is highly inefficient in terms of MSE. An algorithm called greedy algorithm for robust de-noising (GARD)\cite{gard}  which is very closely related to the OMP algorithm discussed in this paper was proposed in \cite{gard} for such scenarios. An algorithmic description of GARD is given in TABLE \ref{tab:gard}. 
\begin{table}\centering
\begin{tabular}{|l|}
\hline
{\bf Input:-} Observed vector ${\bf y}$, Design Matrix ${\bf X}$ and SC. \\
{\bf Initialization:-} ${\bf A}^{(0)}={\bf X}$, ${\bf r}^{(0)}=({\bf I}_n-{\bf P}_{{\bf A}^{(0)}}){\bf y}$. k=1. \\
Repeat Steps 1-4 until SC is met. \\
{\bf Step 1:-} Identify the strongest residual in ${\bf r}^{(k-1)}$, i.e., \\   \ \ \ \ \ \ \ \ \ \ ${j}_k=\underset{j=1,\dotsc,n}{\arg\max}|{\bf r}^{(k-1)}_j|$. \\
{\bf Step 2:-} Update the matrix ${\bf A}^{(k)}=[{\bf A}^{(k-1)}\ \ \ {\bf e}_{{j}_k}]$.\\
{\bf Step 3:-} Jointly estimate $\boldsymbol{\beta}$ and ${\bf g}_{j_1},\dotsc,{\bf g}_{j_k}$ as \\
\ \ \ \ \ \ \ \ \ \ $
[{\hat{\boldsymbol{\beta} }}^T  [\hat{\bf g}_{j_1},\dotsc,\hat{\bf g}_{j_k}] ]^T= {{\bf A}^{(k)}}^{\dagger}{\bf y}.
$\\
{\bf Step 4:-} Update the residual ${\bf r}^{(k)}=({\bf I}_n-{\bf P}_{{\bf A}^{(k)}}){\bf y}$. \\
\ \ \ \ \ \ \ \  \ \ \ \             $k \leftarrow k+1$. \\
 {\bf Output:-} Signal estimate $\hat{\boldsymbol{\beta}}$.\\
\hline
\end{tabular}
\caption{Greedy algorithm for robust de-noising. ${\bf e}_k \in \mathbb{R}^n$ is the $k^{th}$ column of $n \times n$ identity matrix ${\bf I}_n$. }
\label{tab:gard}
\end{table}

GARD can be considered as applying OMP to identify the significant entries in ${\bf g}$  after nullifying the signal component ${\bf X\boldsymbol{\beta}}$ in the regression equation by projecting ${\bf y}$ onto a subspace orthogonal to the column span of ${\bf X}$. Just like OMP, the key component in GARD is the SC. One can stop GARD when $n_{out}$ (which is unknown \textit{a priori}) iterations are performed or when the residual $\|{\bf r^{(k)}}\|_2$ falls below a predefined threshold. However, setting the threshold requires the knowledge of $\sigma^2$. We use the shorthand GARD($n_{out}$) and GARD($\sigma^2$) to represent these schemes. However, producing high quality estimate of $\sigma^2$ in the presence of outliers is also a difficult task.  Further, there exists a level of subjectivity in the choice of this threshold even if $\sigma^2$ is known. A better strategy would be to produce a version of GARD free of any tuning parameters. 

The principle developed for TF-OMP can be used in GARD also. To explain  this, consider the statistic $t(k)=\dfrac{\|{\bf r}^{(k)}\|_2^2}{\|{\bf r}^{(k-1)}\|_2^2}$ and let $k_f$ be the first iteration at which $supp({\bf g}) \subseteq [j_1,\dotsc,j_{k_f}]$. For all $k<k_f$, ${\bf r}^{(k)}$ contains contributions from the outlier ${\bf g}$, whereas for all $k>k_g$, ${\bf r}^{(k)}$ has contributions from noise only. Hence, if entries in ${\bf g}$ are sufficiently large in comparison with noise level $\sigma^2$, just like in the case of OMP, $t(k)$ experience  a sudden dip at $k_f$. The algorithm given in TABLE \ref{tab:tfgard} identify this dip and deliver high quality estimate of $\boldsymbol{\beta}$ without having any tuning parameter. 
\begin{table}\centering
\begin{tabular}{|l|}
\hline 
{\bf Input:-}  Observed vector ${\bf y}$, Design matrix {\bf X}  \\
{\bf Step 1:-} Run GARD for $k_{max}$ iterations. \\
{\bf Step 2:-} Identify $k_f$ as $\hat{k_f}=\underset{1\leq k\leq k_{max}}{\arg\min}{t(k)}$.\\
{\bf Step 3:-} Jointly estimate $\boldsymbol{\beta}$ and ${\bf g}_{j_1},\dotsc,{\bf g}_{j_{\hat{k}_f}}$ as \\
\ \ \ \ \ \ \ \ \ \ $[{\hat{\boldsymbol{\beta}} }^T  [\hat{\bf g}_{j_1},\dotsc,\hat{\bf g}_{j_{\hat{k}_f}}] ]^T= {{\bf A}^{(\hat{k}_f)}}^{\dagger}{\bf y}.$\\
{\bf Output:-} Signal estimate $\hat{\boldsymbol{\beta}}$.\\
\hline 
\end{tabular}
\caption{Tuning free GARD}
\label{tab:tfgard}
\end{table}
\begin{remark}
 The only parameter to be specified in TF-GARD is the maximum iterations $k_{max}$. The maximum number of iterations possible before the matrices ${\bf A}^{(k)}$ becoming rank deficient is  $n-p$. Note that  the objective of sparse outlier modelling is to model few number of gross errors that cannot be modelled by inlier noise. In other words, the model implicitly assumes that $n_{out}\ll n$. Further, the Cospark based analysis in \cite{decoding_candes} reveals that the maximum number of outliers $n_{out}$ that can be tolerated satisfies $n_{out}\leq \floor{\dfrac{\text{Cospark}({\bf X})}{2}}$ and $\text{Cospark}({\bf X})$ satisfies $\text{Cospark}({\bf X})\leq n-p+1$.  Taking these ideas into consideration, we  fix  $k_{max}$ to be $\floor{\dfrac{n-p+1}{2}}$. In any case, one should stop before the matrix ${\bf A}^{(k)}$ is rank deficient or residual ${\bf r}^{(k)}$ is zero. 
 \end{remark}
  A detailed analysis of TF-GARD is not given in this article. However, following the similarities between OMP and GARD, we conjecture that the TF-GARD recover the support of ${\bf g}$ under the same set of conditions used in \cite{gard} albeit at a higher SNR than GARD itself.  Numerical simulations indicate that the  performance of TF-GARD is highly competitive with the  performance of GARD($n_{out}$) and GARD($\sigma^2$)  over a wide range of SNR and SIR. 
\section{Numerical Simulations}
In this section, we numerically evaluate the performance of techniques proposed in this paper \textit{viz}, TF-OMP and TF-GARD  and provide insights into the  strengths and shortcomings of the same. First we consider the case of TF-OMP. We compare the performance of  TF-OMP  with that of OMP($k_0$), OMP$(\sigma^2)$, LASSO\cite{candes2009near} and SPICE\cite{spice,spice_like}. Among these, LASSO and OMP$(\sigma^2)$ are provided with noise variance $\sigma^2$. OMP($\sigma^2$) stop iterations when $\|{\bf r}^{(k)}\|_2\leq \sigma\sqrt{n+2\sqrt{n\log(n)}}$\cite{cai2011orthogonal}.  LASSO in (\ref{lasso}) uses the value $\lambda=2\sigma\sqrt{2\log(p)}$ proposed in \cite{candes2009near}. To remove the bias in LASSO estimate, we re-estimate the non zero entries in LASSO estimate using LS.  As mentioned before, SPICE is a tuning free algorithm. We implement SPICE using the cyclic algorithm proposed in \cite{spice_like}. The iterations in cyclic algorithm is terminated once the difference in the norm of quantity ${\bf p}^i$ in successive iterations are dropped below $10^{-6}$. As observed in \cite{spicenote}, SPICE results in biased estimate. To de-bias the SPICE estimate, we collect the coefficients in the SPICE estimate that comprises $95\%$ of the energy and re-estimate these entries using LS. This estimate  denoted by SPICE($95\%$) in figures exhibits highly competitive performance. All results except the PE vs SNR plot in Fig.\ref{fig.Had32_MSE} and the symbol error rate (SER) vs SNR plot in Fig.\ref{fig.MIMO_OMP} are presented after $10^3$ iterations. These two plots were produced after performing $10^4$ iterations. Unless explicitly stated, the non zero entries of $\boldsymbol{\beta}$ are fixed at $\pm 1$ and the locations of these non zero entries  are randomly permuted in each iteration.  
\subsection{ Small sample performance of TF-OMP. }
In this section, we evaluate the performance of algorithms when the problem dimensions are small. For this, we consider a matrix of the form ${\bf X}=[{\bf I}_n, {\bf H}_n]$, where ${\bf H}_n$ is the $n \times n$ Hadamard matrix. It is well known that the mutual coherence of this matrix is given by $\mu({\bf X})=\dfrac{1}{\sqrt{n}}$\cite{elad_book}. Hence, ${\bf X}$ satisfies the mutual incoherence property whenever $k_0\leq \dfrac{1}{2}(1+\sqrt{n})$. In our experiments we fix $n=32$ and $k_0=3$. Note that $p=2n$ by construction.  For this particular $(n,p,k_0)$, MIC and ERC are satisfied.   

\begin{figure}[htb]
\includegraphics[width=\columnwidth]{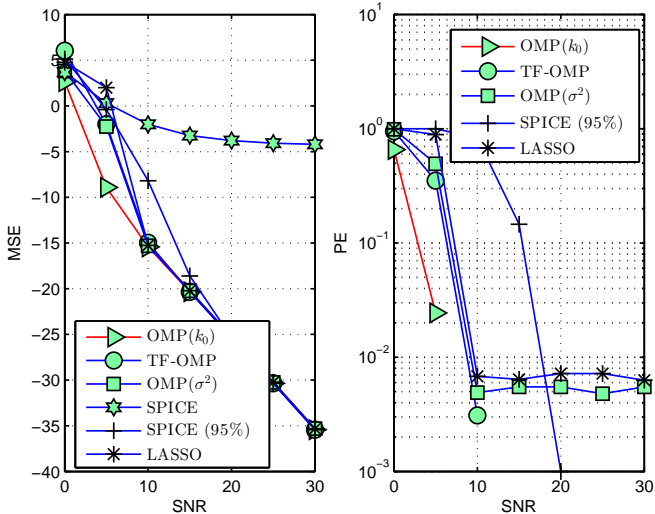}
\caption{ MSE and PE performance when $n=32$ and $k_0=3$ for ${\bf X}=[{\bf I}_n,{\bf H}_n]$. }
\label{fig.Had32_MSE}
\end{figure}

From Fig.\ref{fig.Had32_MSE}, it can be observed that the performance of all algorithms under consideration are equivalent at high SNR in terms of MSE. At low SNR, OMP($k_0$) has the best performance.  The performance of TF-OMP is slightly inferior to OMP($k_0$) at low SNR, whereas it  matches OMP($\sigma^2$) and LASSO across the entire SNR range. TF-OMP is performing better in comparison with both versions of SPICE. This is important considering the fact that SPICE is also a tuning free algorithm. In terms of support recovery error,  OMP($k_0$) has the best performance followed closely by TF-OMP. Both LASSO and OMP($\sigma^2$) are inconsistent at high SNR as proved in \cite{cs_tsp}, whereas TF-OMP is high SNR consistent. This validates Theorems 1-2 in Section \rom{4}. Note that the SPICE estimate contains a number of very small entries which is an artefact of termination criteria. Identifying significant entries from  this estimate in the absence of knowledge of $\boldsymbol{\beta}$ and $\sigma^2$ is difficult and is subjective in nature. We have used a $95\%$ energy criteria to perform this task. However, unlike the MSE performance, we have observed that the $PE$ of SPICE($\alpha\%)$ depends crucially on $\alpha$. 
We choose $95\%$ percent mainly because it gave a very good MSE performance.  As one can observe from Fig.\ref{fig.Had32_MSE}, SPICE$(95\%)$  is also high SNR consistent. However, with a different choice of $\alpha$ one can possibly improve the $PE$ performance. OMP based algorithms being step wise in nature will not have this problem. 
\subsection{ Large sample performance of TF-OMP.  }
In this section, we evaluate the performance of algorithms \\
a).When both $p$ and $k_0$ are fixed and $n$ is increasing  and \\
b).When $p$ is fixed and both $n$ and $k_0$ are increasing.\\
The matrix ${\bf X}$ for this purpose is generated by sampling ${\bf X}(i,j)$ \textit{i.i.d} from a $\mathcal{N}(0,1)$ distribution. Later the columns of ${\bf X}$ are normalised to unit $l_2$ norm.  For the fixed sparsity and increasing $n$ case, all algorithms under consideration except SPICE achieves similar performance. As the number of samples $n$ increase, the MSE improves for all algorithms. In the second case, the sparsity $k_0$ is increased linearly with $n$. From the R.H.S of Fig.\ref{fig.fixedk0_MSE} one can observe that the performance of OMP($k_0$), LASSO and TF-OMP  matches across the $n/p$ ratio under consideration. In particular TF-OMP outperforms both SPICE($95\%$) and OMP($\sigma^2$). 
\begin{figure}[htb]
\includegraphics[width=\columnwidth]{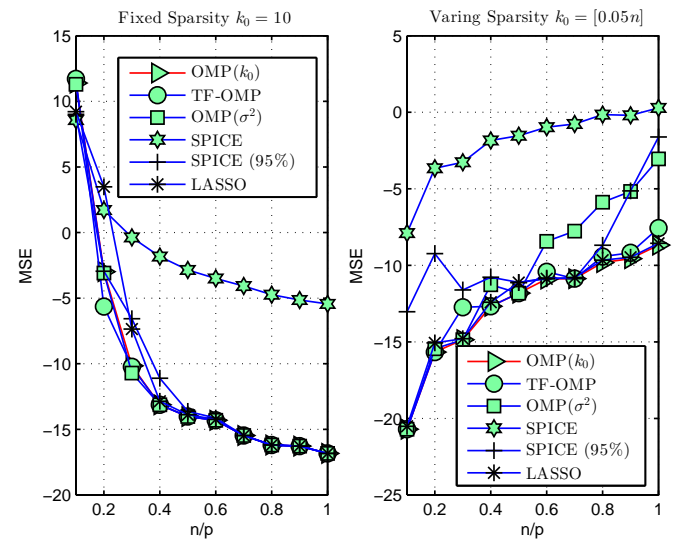}
\caption{ MSE performance when $p=500$, SNR=10 dB and ${\bf X}$ is a Gaussian random matrix. }
\label{fig.fixedk0_MSE}
\end{figure}
\subsection{ Performance of TF-OMP in  signals with high $\boldsymbol{\beta}_{max}/\boldsymbol{\beta}_{min}$.} 
The analysis in Section \rom{4} pointed to a deteriorated performance of TF-OMP when $\boldsymbol{\beta}_{max}/\boldsymbol{\beta}_{min}$ is large. In this section we evaluate this performance degradation numerically. The matrix under consideration is same the $32\times 64$ matrix used in Section \rom{6}.A. The sparsity is fixed at $k_0=3$. However, the magnitude of non zero entries of $\boldsymbol{\beta}$ are $[a,a\alpha,a\alpha^2]$ and the signs are random as before. As the value of $\alpha$ decreases the variation $\boldsymbol{\beta}_{max}/\boldsymbol{\beta}_{min}$ increases. $a$ and $\alpha$ are fixed such that $\|\boldsymbol{\beta}\|_2$ is same as the case when the non zero entries were $\pm1$, i.e., no decay. It is clear from Fig.\ref{fig.exponential_decay} that the performance of TF-OMP indeed deteriorate when non zero entries are decaying and the degradation becomes more and more severe as the decaying factor $\alpha$ decreases. However, as SNR increases the performance of TF-OMP still matches  the performance of OMP$(k_0)$. This again validate Theorem 2, this time for exponentially decaying signals. The other tuning free algorithm under consideration, i.e., SPICE(95\%) also performs poorly in the presence of high $\boldsymbol{\beta}_{max}/\boldsymbol{\beta}_{min}$. Further, unlike TF-OMP, the performance of SPICE(95\%) is not improving with increasing SNR. 
\begin{figure}[htb]
\includegraphics[width=\columnwidth]{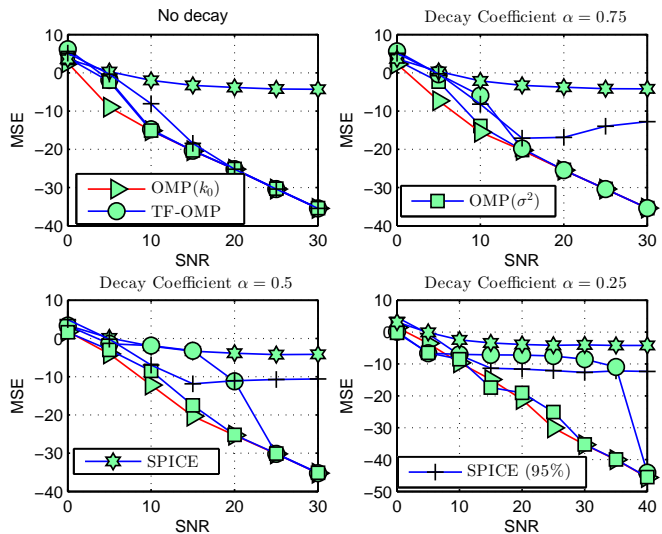}
\caption{ MSE performance when ${\bf X}=[{\bf I}_n,{\bf H}_n]$, $n=32$, $k_0=3$ and non zero entries in $\boldsymbol{\beta}$ are exponentially decaying. }
\label{fig.exponential_decay}
\end{figure}
\subsection{Performance of TF-OMP in the presence of correlated random design matrices}
The performance analysis of TF-OMP was conducted under the assumption of MIC and ERC. In this section, we study the performance of TF-OMP in coherent design matrices where these assumptions are no longer valid. For this purpose we generated  random matrices with $n=32$ and $p=64$ such that all columns ${\bf X}_i$ have unit $l_2$ norm and correlation between ${\bf X}_i$ and ${\bf X}_j$ equals $\rho^{|i-j|}$. As the correlation  factor $0\leq\rho\leq  1$ increases, the correlation between the columns in ${\bf X}$ also increases. $\boldsymbol{\beta}$ has $k_0=3$ non zero entries which are $\pm1$. It can be seen from Fig.\ref{fig.high_correlation} that the performance of all algorithms under consideration degrade with increasing correlation. However, the performance of TF-OMP is much better than OMP($k_0$). This can be attributed to the fact that in highly coherent dictionaries OMP is less likely to cover $\mathcal{I}$ in exactly $k_0$ iterations. At the same time TF-OMP can estimate a superset of entries in $\boldsymbol{\beta}$ containing $\mathcal{I}$ as long as OMP can cover $\mathcal{I}$ within $n/2$ iterations. LASSO and SPICE($95\%$) have the best performance.   The performance of TF-OMP and SPICE($95\%$) are similar in the moderate SNR regime except when correlation factor is very high $\rho=0.75$. However, all algorithms perform poorly at this level of correlation. In fact the MSE of SPICE($95\%$) at $\rho=0.75$ is approximately 12 dB worse than the MSE at $\rho=0$. To summarize, this experiment demonstrate the ability of TF-OMP to perform better than OMP($k_0$) when the design matrix is  coherent. 
\begin{figure}[htb]
\includegraphics[width=\columnwidth]{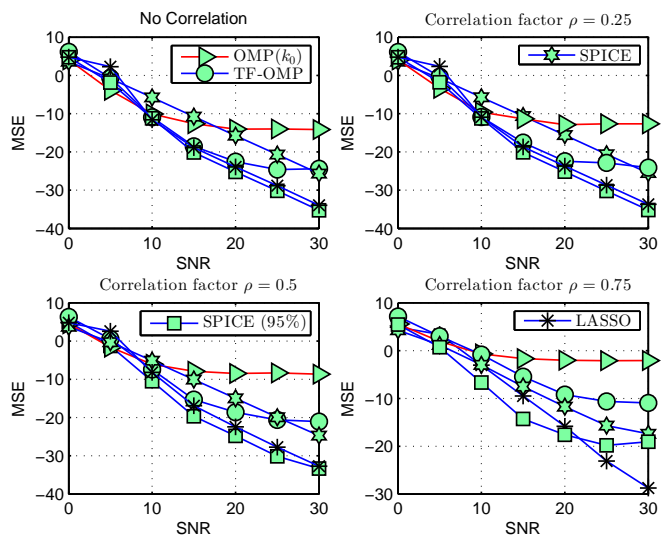}
\caption{ MSE performance when ${\bf X}$ is highly coherent. }
\label{fig.high_correlation}
\end{figure}

\subsection{Performance of low complexity versions of TF-OMP}
Fig.\ref{fig.QTF_OMP} compare the performance of low complexity versions of QTF-OMP1 and QTF-OMP2 with that of TF-OMP and OMP($k_0$). ERC matrix in Fig.\ref{fig.QTF_OMP} denotes simulation setting  considered in Section \rom{6}.A and random matrix denotes the simulation setting in Section \rom{6}.B. It can be seen that the performance of QTF-OMP1 and QTF-OMP2 closely matches the performance of TF-OMP and OMP($k_0$) in the  ERC matrix, in random matrix with $k_0=10$ and $k_0=\floor{0.05n}$. When $k_0=\floor{0.1n}$, it can be seen that the performance of QTF-OMP1 is poorer than the performance of other algorithms when $n/p$ is low. This is because of the fact that $k_{max}$ of QTF-OMP1 at low $n/p$ is lower than $k_0=\floor{0.1n}$ which results in poorer performance. However, $k_0$ is too high in that experiment   and all algorithms under consideration performs poorly. We tabulate the number of iterations of concerned algorithms when $p=500$ and $n$ increases from $100$ to $450$ in TABLE \ref{Tab:QTF}. It can be seen that QTF-OMP1 and QTF-OMP2 requires significantly lower number of iterations in comparison with TF-OMP. To summarize,  it is possible to achieve a performance similar to that of TF-OMP with significantly lower computational complexity using QTF-OMP1 and QTF-OMP2. 
\begin{figure}[htb]
\includegraphics[width=\columnwidth]{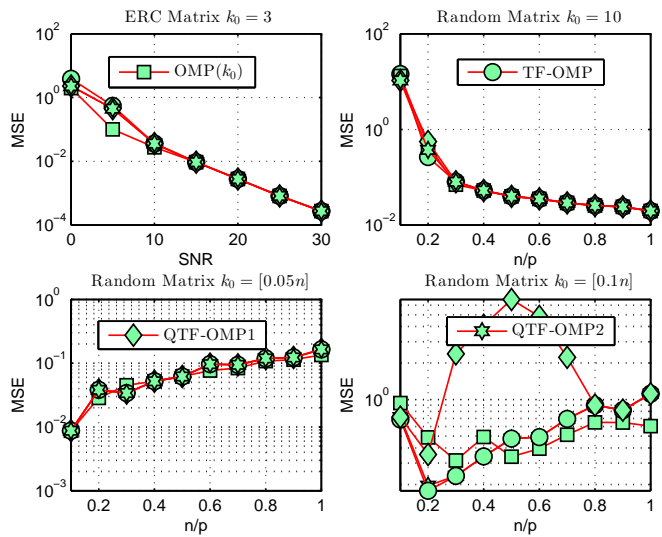}
\caption{ MSE performance of QTF-OMP1 and QTF-OMP2. }
\label{fig.QTF_OMP}
\end{figure}
\begin{table}

\begin{tabular}{|l|l|l|l|l|l|l|l|l| }
\hline
n&100&150&200&250&300&350&400&450 \\
\hline
TF-OMP&50&75&100&125&150&175&200&225\\
\hline
QTF-OMP1&12&15&19&23&28&35&45&68\\
\hline
QTF-OMP2&16&24&32&40&48&56&64&72\\
\hline
 \end{tabular}
 \caption{Number of OMP iterations in TF-OMP, QTF-OMP1 and QTF-OMP2}
 \label{Tab:QTF}
\end{table}
\subsection{Compressive Sensing Based MIMO Detection}
Next we consider a practical application of TF-OMP in the CS based  multiple input multiple output (MIMO) detection framework  proposed in \cite{choi2015sparse} and \cite{choi2015sparse2}. Consider a  MIMO model ${\bf y}={\bf H}{\bf x}+{\bf w}$  with $N_r$ receiver antennas and $N_t$ transmitter antennas. The channel matrix ${\bf H} \sim \mathbb{C}^{N_r \times N_t}$ is assumed to have ${\textit{i.i.d}}\ \mathcal{C}\mathcal{N}(0,1)$ entries and is known completely at receiver. The transmitted vector ${\bf x}$ is modulated using QPSK symbols (i.e., ${\bf x}_j=\pm1\pm i$) and the noise vector  ${\bf w}\sim \mathcal{C}\mathcal{N}({\bf 0}_n,\sigma^2{\bf I}_n)$. Since ML decoding of large scale MIMO systems are NP-hard, low complexity sub optimal detectors are widely preferred. Let $\hat{\bf x}$ be an estimate of ${\bf x}$ using a low complexity MIMO detector. Then it is argued in \cite{choi2015sparse} and \cite{choi2015sparse2} that the error vector ${\bf e}={\bf x}-\hat{\bf x}$ is sparse in the moderate to high SNR regime, i.e., $\|{\bf e}\|_0\ll N_t$. Hence one can estimate ${\bf e}$ from the regression model $\tilde{\bf y}={\bf y}-{\bf H}\hat{\bf x}={\bf H}{\bf e}+{\bf w}$ using CS algorithms and this post processing can be used to correct the error in $\hat{\bf x}$ and improve the SER=$\mathbb{P}({\bf x}\neq \hat{\bf x})$ performance. This framework is generic in the sense that any CS algorithm can be used for CS stage and any algorithm can be used to produce the preliminary estimate $\hat{\bf x}$. Even though this technique is applicable to both overdetermined $N_r\geq N_t$ and underdetermined $N_r<N_t$ MIMO systems, \cite{choi2015sparse} and \cite{choi2015sparse2} considered only the $N_r\geq N_t$ case and  assumed that the noise variance $\sigma^2$ is known. Further to implement the CS stage, it was assumed that the number of errors $k_0=\|{\bf e}\|_0\leq 0.15N_t$. 

In this section, we apply TF-OMP for  detection in both underdetermined and overdetermined MIMO systems. In both cases $\sigma^2$ is assumed to be unknown. ``Algorithm 1+Algorithm 2" in Fig.\ref{fig.MIMO_OMP} represents the performance of CS based MIMO detection with Algorithm 1 in first stage and Algorithm 2 in second stage. In the overdetermined MIMO case shown in the L.H.S of Fig.\ref{fig.MIMO_OMP}, we use the LMMSE estimator in the first stage\cite{choi2015sparse,choi2015sparse2}.  Here, one can estimate $\sigma^2$ using ML estimate of $\sigma^2$ (denoted by $\tilde{\sigma^2}$) and this estimate is used in implementing both LMMSE and OMP($\tilde{\sigma^2}$) algorithms.  From the L.H.S of Fig.\ref{fig.MIMO_OMP} it is clear that the performance of LMMSE+TF-OMP closely matches  the performance of LMMSE+OMP($\tilde{\sigma^2}$) and LMMSE+OMP($k_0$).
\begin{figure}[htb]
\includegraphics[width=\columnwidth]{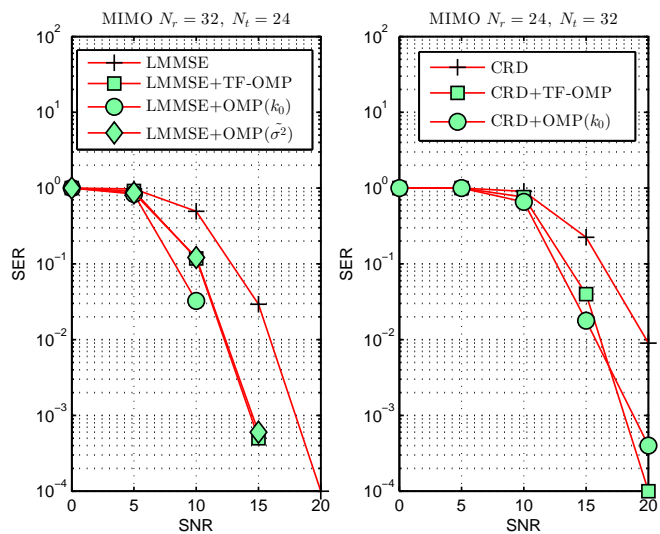}
\caption{ SER performance of TF-OMP based MIMO detection. }
\label{fig.MIMO_OMP}
\end{figure}

A more interesting case is detection in underdetermined MIMO systems where $\sigma^2$ is  non estimable. Hence, both Algorithm 1 and Algorithm 2 must not depend on $\sigma^2$. This means that the LMMSE estimator considered in \cite{choi2015sparse} and \cite{choi2015sparse2} cannot be applied in underdetermined MIMO models in the absence of prior knowledge of $\sigma^2$. We use the widely popular convex relaxation  detector (CRD) which estimate the entries of ${\bf x}_r=[real({\bf x})^T,imag({\bf x})^T]^T$ in the real valued equivalent MIMO model ${\bf y}_r={\bf H}_r{\bf x}_r+{\bf w}_r$ using \cite{MIMO_Lagrangian}
\begin{equation}
\hat{\bf x}_r=\underset{{\bf z} \in \mathbb{R}^{2N_t},-1\leq z_i\leq  1}{\arg\min}\|{\bf y}_r-{\bf H}_r{\bf z}\|_2^2.
\end{equation} 
We implemented this optimization problem using the ``lsqlin" function in MATLAB. Later we produce the preliminary estimate $\hat{\bf x}$ by quantizing $\hat{\bf x}_r(1:N_t)+i\hat{\bf x}_r(N_t+1:2N_t)$. From the R.H.S of Fig.\ref{fig.MIMO_OMP} it is clear that CDR+TF-OMP significantly outperforms CDR as SNR improves. Further, the performance of CDR+TF-OMP is very close to that of CDR+OMP($k_0$). This good performance is achieved without assuming anything about $k_0$ and without knowing $\sigma^2$.  This demonstrate that TF-OMP can be considered as an algorithm of choice for implementing the CS stage in the framework proposed in \cite{choi2015sparse} and \cite{choi2015sparse2} for underdetermined MIMO detection systems when $\sigma^2$ is unknown.

\subsection{Performance of TF-GARD}
Recall  the linear regression model with sparse outlier given by ${\bf y}={\bf X}\boldsymbol{\beta}+{\bf w}+{\bf g}$, where ${\bf w}\sim\mathcal{N}({\bf 0}_n,\sigma^2{\bf I}_n)$ is the inlier and ${\bf g}$ is the sparse outlier. The matrix ${\bf X} \in \mathbb{R}^{n \times p}$ is generated by \textit{i.i.d} sampling from $\mathcal{N}(0,1)$ and later normalised to have unit $l_2$ norm. All entries of $\boldsymbol{\beta}$ are non zero and generated \textit{i.i.d} according to $\mathcal{N}(0,1)$. We fix $n=250$ and $p=30$. The non zero entries of ${\bf g}$ have magnitude $\sqrt{\dfrac{\|{\bf X}\boldsymbol{\beta}\|_2^2}{n_{out}SIR}}$ with random signs. In figures Fig.\ref{fig.robust_SNR} and Fig.\ref{fig.robust_nout}, ``WO" represent the performance of LS estimate of $\boldsymbol{\beta}$ in the absence of outliers and ``LS" represents the performance of LS estimate in the presence of outliers. In effect ``WO" is the best performance one can hope for. GARD($\sigma^2$) is the version of GARD which stops iterations when the residual drop below $\epsilon_{gard}$, a bound on $\|{\bf w}\|_2$. 
This is the tuning parameter for GARD($\sigma^2$). We fix the value of $\epsilon_{gard}$ at $\epsilon_{gard}=\sigma\sqrt{n+2\sqrt{n\log(n)}}$. Note that for ${\bf w}\sim\mathcal{N}({\bf 0}_n,\sigma^2{\bf I}_n)$, $\mathbb{P}\left(\|{\bf w}\|_2>\sigma\sqrt{n+2\sqrt{n\log(n)}}\right)\leq \dfrac{1}{n}$\cite{cai2011orthogonal} and hence this stopping rule   is highly accurate.  M-est use Tukey's bi-weight estimator and is implemented using the MATLAB function ``robustfit" with default settings\cite{maronna2006wiley}. This is also a tuning free robust algorithm.   
\begin{figure}[htb]
\includegraphics[width=\columnwidth]{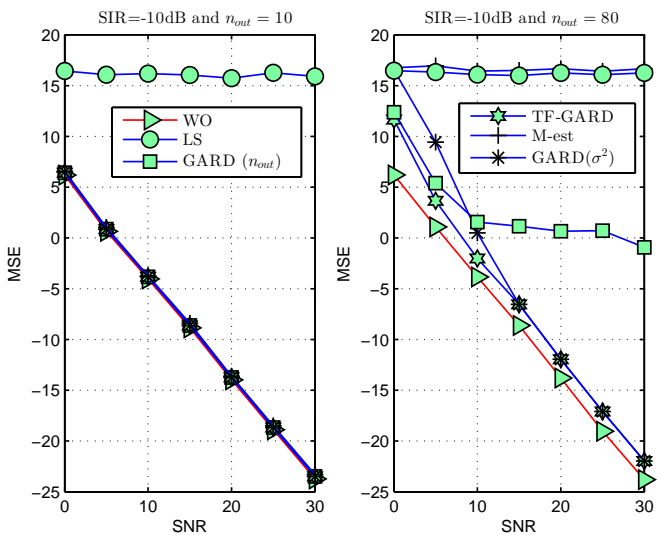}
\caption{ MSE performance for fixed SIR, $n_{out}$ and varying SNR. }
\label{fig.robust_SNR}
\end{figure}

In Fig.\ref{fig.robust_SNR} we compare the MSE performance of algorithms when the SIR is low and SNR is varying. It can be seen that when the number of outliers is low ($n_{out}=10$), the performance of all algorithms matches the performance of LS estimate in the absence of outliers. In other words, all algorithms under consideration are able to mitigate the effect of outliers. However, when the number of outliers is high, i.e., $n_{out}=80$, the performance of all algorithms deviate from the ideal represented by ``WO". However, the deviation from the optimal is minimum  for both TF-GARD and GARD($\sigma^2$) at high SNR. Throughout the low to moderate SNR regime, TF-GARD outperforms GARD($\sigma^2$). Note that TF-GARD is completely oblivious to the inlier noise statistics which is provided to GARD$(\sigma^2)$.  The performance of M-est at this level of $n_{out}$ is very poor and is comparable to that of the ordinary LS estimate.   
\begin{figure}[htb]
\includegraphics[width=\columnwidth]{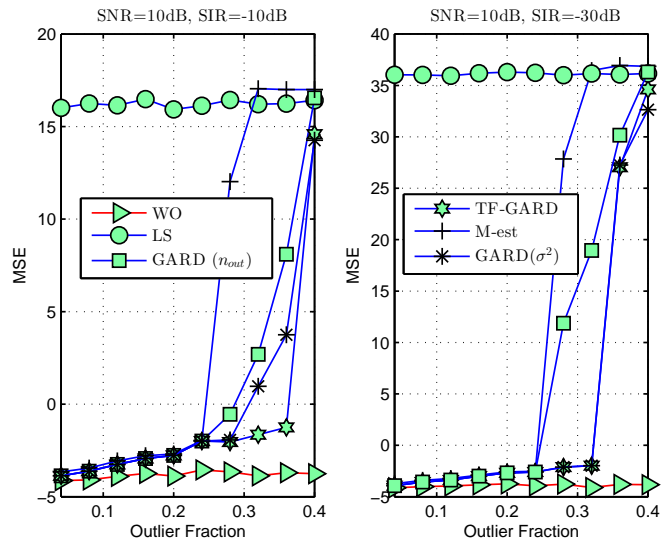}
\caption{ MSE performance for fixed SIR, SNR and varying $n_{out}$. }
\label{fig.robust_nout}
\end{figure}

In Fig.\ref{fig.robust_nout} we present the MSE performance of algorithms when SNR is fixed and the number of outliers are varying. When the number of outliers are increasing, the performance of all algorithms deteriorates. However, the breakdown point of GARD based schemes are much higher than that of M-est. Further, the performance of TF-GARD matches the performance of GARD$(\sigma^2)$ across the entire range of outlier fractions and slightly outperforms the latter in some cases.  Over a wide range of simulations conducted, we have observed that TF-GARD matches the performance of GARD$(\sigma^2)$. Further, the observations made in \cite{gard} about  the relative merits and de-merits of GARD w.r.t algorithms like \cite{error_correction_candes,bdrao_robust} hold true for TF-GARD also.
\section{Conclusion and Future Research}
This article developed a novel OMP based algorithm called TF-OMP that does not require sparsity level $k_0$ or noise variance $\sigma^2$ for efficient operation. TF-OMP is both analytically and numerically shown to  achieve highly competitive  performance in comparison with existing versions of OMP. The operating principle behind TF-OMP is extended to produce TF-GARD which is also exhibiting competitive performance. The broader area of CS involves  many problem scenarios other than the linear regression model considered in this article.  However, most CS algorithms involve tuning parameters that depends on nuisance parameters like noise variance which are difficult to estimate. Hence, it is of tremendous importance to develop  tuning free  and computationally efficient  algorithms like TF-OMP and TF-GARD for other CS applications also.
\bibliography{compressive.bib}
\bibliographystyle{IEEEtran}

\end{document}